\documentclass[10pt,letterpaper]{article}
\usepackage[margin=1in]{geometry}

\usepackage{microtype}
\usepackage{graphicx}
\usepackage{subcaption}
\usepackage{booktabs} % for professional tables
\usepackage{colortbl}
\usepackage{multirow}
\usepackage{natbib}

\usepackage{hyperref}
\hypersetup{
  pdftitle={From Text to Forecasts: Bridging Modality Gap with Temporal Evolution Semantic Space},
  pdfauthor={Lehui Li, Yuyao Wang, Jisheng Yan, Wei Zhang, Jinliang Deng, Haoliang Sun, Zhongyi Han, Yongshun Gong},
  pdfsubject={Preprint},
  pdfkeywords={Time-Series Forecasting, Multimodal Learning, Temporal Evolution Semantic Space}
}

\usepackage{tikz}

\newcommand{\best}[1]{\textcolor{red}{\textbf{#1}}}
\newcommand{\second}[1]{\textcolor{blue}{\textbf{#1}}}
\newcommand{\method}{\textsc{TESS}}

\newcommand{\cV}{\mathcal V}
\newcommand{\E}{\mathbb E}
\newcommand{\e}{\varepsilon} % (optional)

\newcommand{\bz}{\mathbf z}
\newcommand{\err}{\mathrm{err}}
\newcommand{\true}{\mathrm{true}}
\newcommand{\timev}{\mathrm{time}}

\newcommand{\circled}[2]{%
  \tikz[baseline=(char.base)]{
    \node[shape=circle,draw=#1,fill=#1!20,inner sep=1.5pt,minimum size=1.2em] (char) {\textbf{\small #2}};
  }%
}
\newcommand{\cOne}{\circled{blue}{1}}
\newcommand{\cTwo}{\circled{orange}{2}}
\newcommand{\cI}{\circled{teal}{i}}
\newcommand{\cII}{\circled{purple}{ii}}
\newcommand{\cIII}{\circled{red}{iii}}

\newcommand{\glyphStrongRise}{\tikz[baseline=-0.3ex]{\draw[->,thick,red!70!black] (0,0) -- (0,0.5); \draw[->,thick,red!70!black] (0.12,0) -- (0.12,0.5);}}
\newcommand{\glyphMildRise}{\tikz[baseline=-0.3ex]{\draw[->,thick,orange] (0,0) -- (0,0.5);}}
\newcommand{\glyphStable}{\tikz[baseline=0.1ex]{\draw[-,thick,gray] (0,0) -- (0.4,0);}}
\newcommand{\glyphMildDrop}{\tikz[baseline=-0.3ex]{\draw[->,thick,cyan!70!blue] (0,0.5) -- (0,0);}}
\newcommand{\glyphStrongDrop}{\tikz[baseline=-0.3ex]{\draw[->,thick,blue!70!black] (0,0.5) -- (0,0); \draw[->,thick,blue!70!black] (0.12,0.5) -- (0.12,0);}}

\newcommand{\glyphAscend}{\tikz[baseline=0.1ex]{%
  \draw[line width=1.2pt,red!70!black] (0,0) .. controls (0.15,0.1) and (0.25,0.4) .. (0.45,0.5);
  \fill[red!70!black] (0.42,0.47) -- (0.5,0.52) -- (0.42,0.42) -- cycle;}}
\newcommand{\glyphDescend}{\tikz[baseline=0.1ex]{%
  \draw[line width=1.2pt,blue!70!black] (0,0.5) .. controls (0.15,0.4) and (0.25,0.1) .. (0.45,0);
  \fill[blue!70!black] (0.42,0.08) -- (0.5,-0.02) -- (0.42,0.03) -- cycle;}}
\newcommand{\glyphPeak}{\tikz[baseline=0.05ex]{%
  \draw[line width=1.2pt,orange!85!black] (0,0.08) .. controls (0.1,0.1) and (0.15,0.45) .. (0.23,0.48) 
    .. controls (0.31,0.45) and (0.36,0.1) .. (0.46,0.08);
  \fill[orange!85!black] (0.23,0.52) circle (0.04);}}
\newcommand{\glyphTrough}{\tikz[baseline=0.05ex]{%
  \draw[line width=1.2pt,teal!80!black] (0,0.42) .. controls (0.1,0.4) and (0.15,0.05) .. (0.23,0.02) 
    .. controls (0.31,0.05) and (0.36,0.4) .. (0.46,0.42);
  \fill[teal!80!black] (0.23,-0.02) circle (0.04);}}
\newcommand{\glyphOscillate}{\tikz[baseline=0.15ex]{%
  \draw[line width=1.2pt,purple!80!black] (0,0.22) sin (0.08,0.42) cos (0.16,0.22) sin (0.24,0.02) cos (0.32,0.22) sin (0.4,0.42) cos (0.48,0.22);}}

\newcommand{\glyphSurge}{\tikz[baseline=0.12ex]{%
  \draw[line width=1pt,red!70!black] (0,0.2) -- (0.07,0.5) -- (0.14,0) -- (0.21,0.48) -- (0.28,0.05) -- (0.35,0.45) -- (0.42,0.1);}}
\newcommand{\glyphVolRise}{\tikz[baseline=0.12ex]{%
  \draw[line width=1pt,orange!85!black] (0,0.22) -- (0.08,0.32) -- (0.16,0.12) -- (0.24,0.38) -- (0.32,0.08) -- (0.4,0.42);}}
\newcommand{\glyphVolStable}{\tikz[baseline=0.15ex]{%
  \draw[line width=1pt,gray!70!black] (0,0.22) -- (0.1,0.28) -- (0.2,0.18) -- (0.3,0.26) -- (0.4,0.2);}}
\newcommand{\glyphVolFall}{\tikz[baseline=0.12ex]{%
  \draw[line width=1pt,cyan!70!blue] (0,0.1) -- (0.08,0.42) -- (0.16,0.05) -- (0.24,0.35) -- (0.32,0.18) -- (0.4,0.28);}}
\newcommand{\glyphCalm}{\tikz[baseline=0.18ex]{%
  \draw[line width=1pt,blue!60!black] (0,0.22) -- (0.1,0.24) -- (0.2,0.21) -- (0.3,0.23) -- (0.4,0.22);}}

\newcommand{\glyphEarlyFade}{\tikz[baseline=0ex]{\fill[red!70!black] (0,0) rectangle (0.12,0.45); \fill[gray!50] (0.14,0) rectangle (0.26,0.2); \fill[gray!30] (0.28,0) rectangle (0.4,0.1);}}
\newcommand{\glyphEarlyPersist}{\tikz[baseline=0ex]{\fill[red!70!black] (0,0) rectangle (0.12,0.45); \fill[red!60!black] (0.14,0) rectangle (0.26,0.4); \fill[red!50!black] (0.28,0) rectangle (0.4,0.35);}}
\newcommand{\glyphMidFade}{\tikz[baseline=0ex]{\fill[gray!30] (0,0) rectangle (0.12,0.15); \fill[orange] (0.14,0) rectangle (0.26,0.45); \fill[gray!30] (0.28,0) rectangle (0.4,0.15);}}
\newcommand{\glyphMidPersist}{\tikz[baseline=0ex]{\fill[gray!30] (0,0) rectangle (0.12,0.15); \fill[orange] (0.14,0) rectangle (0.26,0.45); \fill[orange!80] (0.28,0) rectangle (0.4,0.4);}}
\newcommand{\glyphLate}{\tikz[baseline=0ex]{\fill[gray!30] (0,0) rectangle (0.12,0.15); \fill[gray!30] (0.14,0) rectangle (0.26,0.15); \fill[teal] (0.28,0) rectangle (0.4,0.45);}}
\newcommand{\glyphDiffuse}{\tikz[baseline=0ex]{\fill[purple!60] (0,0) rectangle (0.12,0.28); \fill[purple!60] (0.14,0) rectangle (0.26,0.28); \fill[purple!60] (0.28,0) rectangle (0.4,0.28);}}

\usepackage{amsmath}
\usepackage{amssymb}
\usepackage{mathtools}
\usepackage{amsthm}
\usepackage{comment}

\usepackage[capitalize,noabbrev]{cleveref}

\usepackage{enumitem}

\usepackage{float}
\usepackage{tcolorbox}
\tcbuselibrary{listings,breakable,skins}

\newtcolorbox{promptbox}[1][]{
  colback=gray!5,
  colframe=gray!70,
  fonttitle=\bfseries,
  coltitle=black,
  title=#1,
  breakable,
  enhanced,
  boxrule=0.5pt,
  arc=2pt,
  left=3pt,
  right=3pt,
  top=3pt,
  bottom=3pt
}
\theoremstyle{plain}
\newtheorem{theorem}{Theorem}[section]

\theoremstyle{definition}

\newtheorem{assumption}[theorem]{Assumption}
\theoremstyle{remark}
\newtheorem{remark}[theorem]{Remark}

\newtheorem*{theorem*}{Theorem}

\usepackage[textsize=tiny]{todonotes}
\usepackage{xcolor}

\begin{document}

\twocolumn[
  \begin{center}
    {\LARGE\bfseries From Text to Forecasts: Bridging Modality Gap with Temporal Evolution Semantic Space\par}
    \vspace{0.9em}
    {\large
      Lehui Li\textsuperscript{1,*},
      Yuyao Wang\textsuperscript{2,*},
      Jisheng Yan\textsuperscript{1},
      Wei Zhang\textsuperscript{1},
      Jinliang Deng\textsuperscript{3,4,\textdagger},
      Haoliang Sun\textsuperscript{1},
      Zhongyi Han\textsuperscript{1},
      Yongshun Gong\textsuperscript{1,\textdagger}\par
    }
    \vspace{0.6em}
    {\normalsize
      \textsuperscript{1}School of Software, Shandong University, China\par
      \textsuperscript{2}Boston University, Boston, MA, USA\par
      \textsuperscript{3}State Key Laboratory of Complex \& Critical Software Environment, Beihang University\par
      \textsuperscript{4}Research Institute of Trustworthy Autonomous Systems, SUSTech\par
    }
    \vspace{0.4em}
    {\normalsize
      Correspondence to:
      \href{mailto:jinliangdeng9588@gmail.com}{\texttt{jinliangdeng9588@gmail.com}},
      \href{mailto:ysgong@sdu.edu.cn}{\texttt{ysgong@sdu.edu.cn}}\par
    }
    \vspace{0.3em}
    {\normalsize \textsuperscript{*}Equal contribution. \textsuperscript{\textdagger}Co-corresponding authors.\par}
  \end{center}
  \vspace{1em}
  \begin{center}
    \begin{minipage}{0.95\textwidth}
      \small
      \textbf{Abstract.}
      Incorporating textual information into time-series forecasting holds promise for addressing event-driven non-stationarity; however, a fundamental modality gap hinders effective fusion: textual descriptions express temporal impacts implicitly and qualitatively, whereas forecasting models rely on explicit and quantitative signals. Through controlled semi-synthetic experiments, we show that existing methods over-attend to redundant tokens and struggle to reliably translate textual semantics into usable numerical cues. To bridge this gap, we propose \method{}, which introduces a Temporal Evolution Semantic Space as an intermediate bottleneck between modalities. This space consists of interpretable, numerically grounded temporal primitives---mean shift, volatility, shape, and lag---extracted from text by an LLM via structured prompting and filtered through confidence-aware gating. Experiments on four real-world datasets demonstrate up to a 29\% reduction in forecasting error compared to state-of-the-art uni-modal and multimodal baselines. The code will be released after acceptance.
    \end{minipage}
  \end{center}
  \vspace{1.2em}
]

\section{Introduction}

% \begin{figure}[!tbp]
%   \centering
%   \includegraphics[width=\columnwidth]{figures/motivation.png}
%   \vspace{-0.05in}
%   \caption{Illustration of the cross-modal transformation via the Temporal Semantic Space. Verbose textual narratives (left) are distilled into structured temporal primitives (middle), which then guide numerical forecasting (right).}
%   \label{fig:motivation}
% \end{figure}
\begin{figure}[!tbp]
  \centering
  \includegraphics[width=0.9\columnwidth]{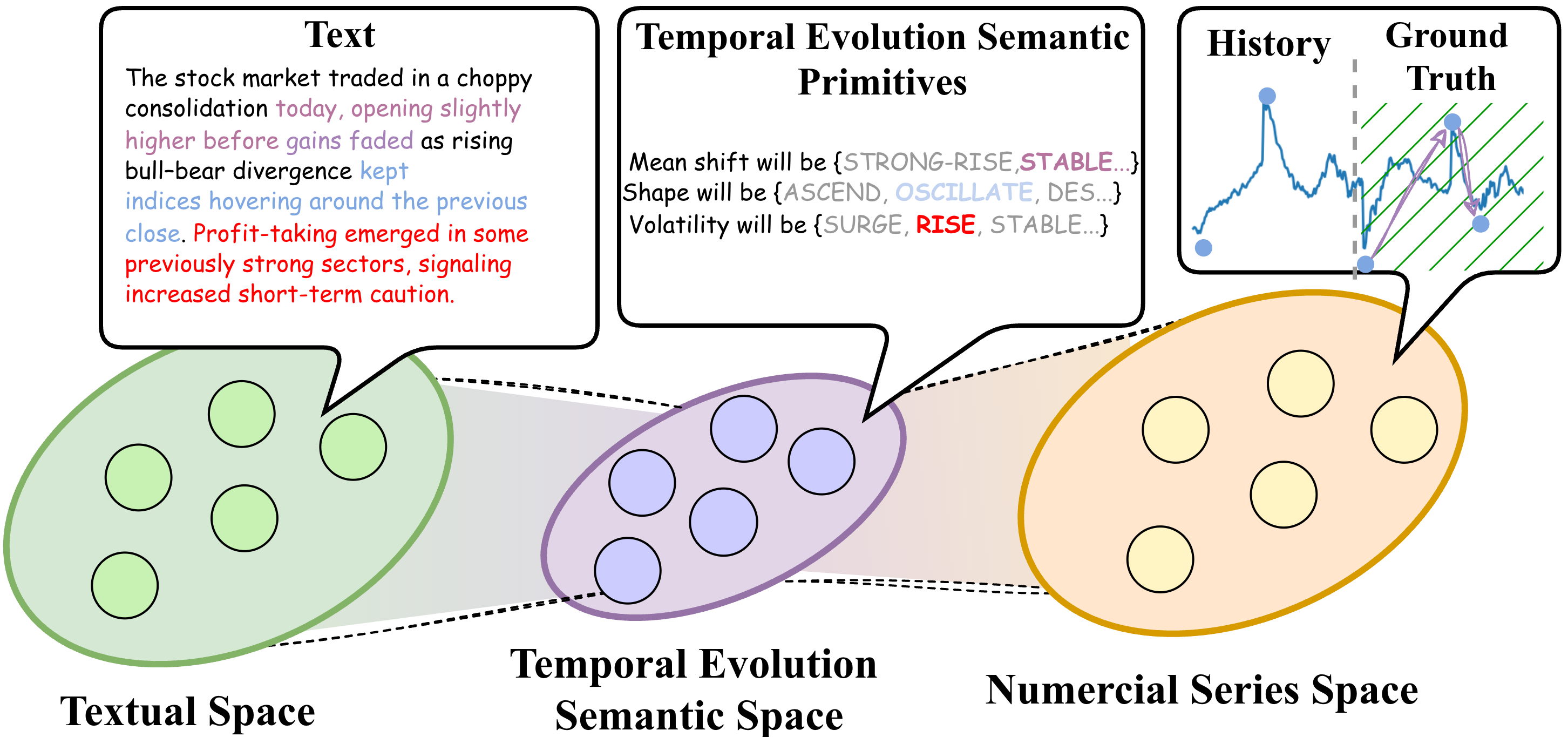}
  \vspace{-0.05in}
  \caption{Illustration of the cross-modal transformation via the Temporal Semantic Space. Verbose textual narratives (left) are distilled into structured temporal primitives (middle), which then guide numerical forecasting (right).}
  \label{fig:motivation}
\end{figure}

Time-series forecasting plays a critical role in diverse domains such as transportation, energy, and finance \citep{zhou2021informer,10.24963/ijcai.2025/371}. In recent years, unimodal forecasting models that rely solely on historical numerical observations have achieved substantial progress by modeling temporal dependencies \citep{wu2021autoformer}. However, real-world time series often exhibit significant non-stationarity \citep{du2021adarnn, kim2025battling}, where the underlying data-generating mechanism evolves over time. In particular, exogenous events such as accidents, extreme weather, or public sentiment shocks can trigger regime shifts \citep{osogami2021second,yang2025can}, causing statistical properties---including trends, volatility, mean, and variance---to change abruptly within short time windows \citep{10887343}. Under such conditions, models struggle to extract reliable predictive signals from historical numerical observations alone, leading to severe performance degradation \citep{zeng2023transformers}.

To mitigate event-driven non-stationarity, recent work has begun to incorporate textual data (e.g. news articles, social media posts) \citep{sawhney2020deep}, as an exogenous information source. The typical approach leverages pretrained language models to encode text into semantic embeddings \citep{niu2023kefvp}, which are then fused with numerical time-series features via concatenation, gating, or cross-modal attention \citep{sawhney2020deep,koval2025multimodal}. These multimodal methods have demonstrated notable gains on benchmarks with significant non-stationarity.

Despite recent progress, the substantial \emph{modality gap} \citep{liang2022mind} between time series data and textual data remains largely overlooked \citep{koval2025multimodal}, fundamentally limiting multimodal forecasting. Time series data is chronologically ordered and quantitative, offering precise measurements of temporal dynamics, yet it lacks explicit semantic abstraction. In contrast, textual data is semantically rich but unstructured and qualitative, with the impacts of events on temporal dynamics often implicit, diffuse, and weakly grounded in time. As a result, the predictive relevance of textual information is sparsely distributed across tokens and rarely aligned with the compact numerical structure required by time-series models. Through controlled semi-synthetic experiments, we analyze time-series predictors that directly fuse numerical observations with raw textual embeddings. We find that: (1) predictors fail to focus on prediction-relevant tokens, instead being distracted by noisy and redundant textual content;
(2) even after removing redundant information, models still struggle to correctly decode temporal evolution signals, indicating a fundamental representational mismatch between the two modalities.

To bridge this modality gap, As illustrated in Figure~\ref{fig:motivation}, we propose to construct an intermediate representation between the textual space and the numerical series space, termed the \emph{Temporal Evolution Semantic Space}. This space serves as an information bottleneck that explicitly filters, structures, and organizes textual information relevant to temporal evolution. Inspired by expert-driven paradigms in temporal pattern analysis, we predefine a set of temporal evolution primitives corresponding to critical attributes governing temporal dynamics, such as mean shift, volatility shift, evolution shape, and lag. Crucially, these primitives are expressed as verbal specifications, enabling large language models (LLMs) to emulate human-like judgments when interpreting textual descriptions of events. We decompose the fusion process into two stages: \cOne~\textbf{Text $\to$ Semantic Space}: We prompt an LLM to reason over the input text and explicitly label latent temporal evolution signals using the predefined primitives. Since such labels may be unreliable due to information insufficiency or limitations of the LLM, we introduce a learnable gating network to estimate and weight the reliability of the generated semantic labels. \cTwo~\textbf{Semantic Space $\to$ Numerical Space}: The inferred temporal evolution primitives are then injected as exogenous conditions into the time-series model, allowing it to leverage its numerical modeling and pattern recognition capabilities to ground semantic signals in observed temporal dynamics. Our \textbf{contributions} are summarized as follows:
\begin{itemize}[leftmargin=*,itemsep=2pt,topsep=2pt]
  \item Through semi-synthetic experiments, we identify two bottlenecks in text-time-series fusion: attention is distracted by redundant context, and qualitative expressions resist decoding into predictive gains.
  \item We propose \method{}, an intermediate space that distills text into temporal primitives via LLM extraction with confidence-aware gating, guiding forecasters as exogenous conditions.
  \item Experiments on four datasets show up to 29\% MSE reduction; analyses validate primitive complementarity and gating effectiveness across non-stationary patterns.
\end{itemize}

% \textbullet Through semi-synthetic experiments, we reveal two bottlenecks in text-time-series fusion: attention is distracted by redundant context, .

% \textbullet We propose \tesc{}, an intermediate space that distills text into temporal primitives via LLM extraction with confidence-aware gating

% \textbullet Experiments on four datasets show up to 29\% MSE reduction; analyses validate primitive complementarity 

\section{Related Work}

\textbf{Uni-Modal Time-Series Forecasting.}
Early time series forecasting relies on classical statistical methods that extrapolate trends under stationarity assumptions \cite{chen2004load,huang2003short,kalekar2004time}, performing well on smaller, simpler datasets. Deep learning models \cite{10.24963/ijcai.2025/371,deng2024disentangling,zhou2022fedformer} have since enabled end-to-end learning of nonlinear patterns and temporal dependencies in more complex settings. Nevertheless, these methods remain series-centric and inadequately account for external drivers and task context \cite{kim2022reversible, wang2024news}.

\paragraph{Multimodal Time-Series Forecasting.}

Multimodal time series forecasting integrates external text (e.g., news and announcements) \cite{timemmd,dong2024fnspid} with historical numerical sequences to better capture distribution drift and structural breaks in event-driven scenarios \cite{wang2024news}. Early approaches relied on shallow semantics such as sentiment scores or keyword, which often fail to reflect finer-grained temporal variations like mean shifts and volatility clustering. Recent advances in pretrained language models \cite{touvron2023llama} have motivated direct text-series fusion via cross-attention \cite{su2025textreinforcementmultimodaltime,timemmd} or LLMs \cite{wang2024news,chang2025time}. However, such methods suffer from narrative redundancy and the difficulty of translating implicit, non-quantitative expressions into stable predictive signals, leading to optimization instability and performance degradation.

\section{Preliminaries}
\subsection{Problem Formulation} 

We consider the multimodal time-series forecasting task. Let $\mathbf{x}_t \in \mathbb{R}^d$ denote the numerical observation at time step $t$, and let $s_t$ represent the temporally aligned textual information (e.g., news articles, announcements). Given a historical window of length $L$, the numerical input is defined as 
\begin{equation}
\mathbf{X}_{\text{time}} = [\mathbf{x}_{t-L+1}, \ldots, \mathbf{x}_t] \in \mathbb{R}^{L \times d},
\end{equation}
and the textual input is $\mathbf{X}_{\text{text}} = s_t$. The objective is to predict the numerical sequence over the next $H$ steps:
\begin{equation}
\hat{\mathbf{Y}}_t = [\hat{\mathbf{x}}_{t+1}, \ldots, \hat{\mathbf{x}}_{t+H}] = f_\theta(\mathbf{X}_{\text{time}}, \mathbf{X}_{\text{text}}),
\end{equation}
where $f_\theta$ denotes a multimodal forecasting model. The prevalent fusion paradigm employs modality-specific encoders each modality into latent representations $\mathbf{H}_{\text{time}} = \phi_{\text{time}}(\mathbf{X}_{\text{time}}) \in \mathbb{R}^{N \times d_1}$ and $\mathbf{H}_{\text{text}} = \phi_{\text{text}}(\mathbf{X}_{\text{text}}) \in \mathbb{R}^{M \times d_2}$, where $M$ is the number of text tokens, which are integrated via cross-modal attention or other fusion mechanisms.

\subsection{Diagnosing the Modality Gap}
\paragraph{Semi-Synthetic Benchmark Construction}

\begin{figure}[!tbp]
  \centering
  \includegraphics[width=\columnwidth]{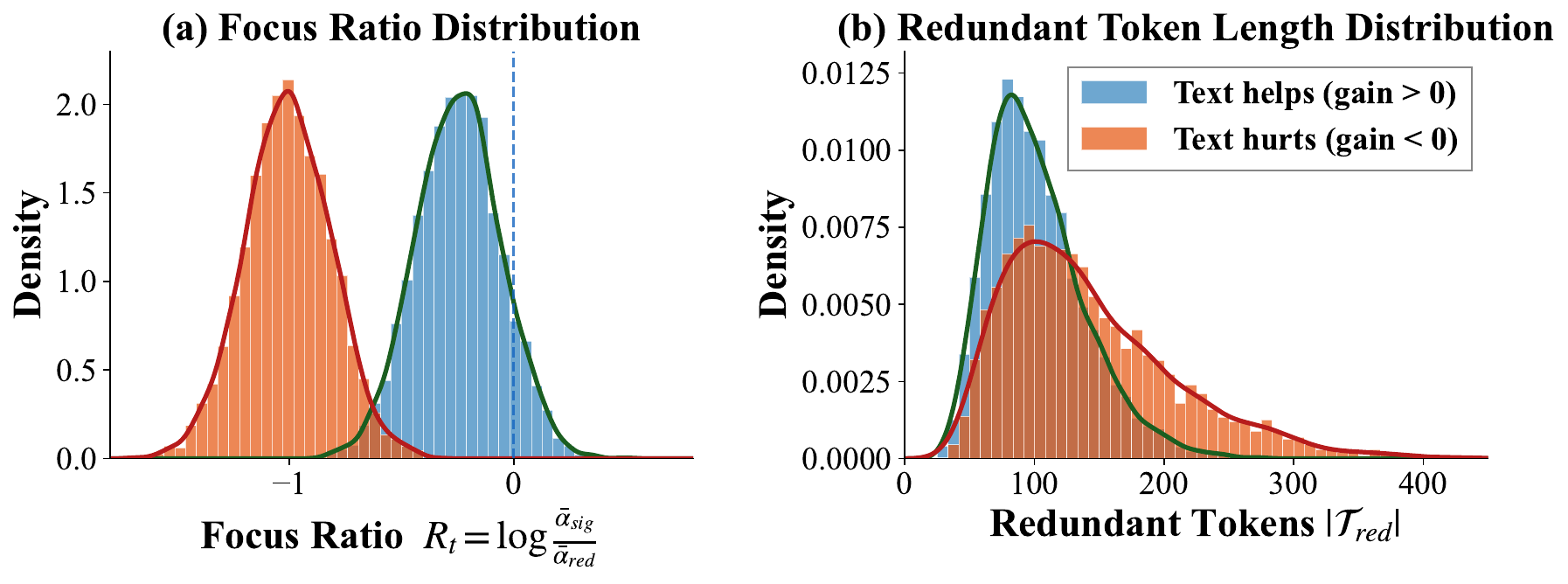}
  \vspace{-0.05in}
  \caption{Analysis of attention misalignment. \textbf{\textit{Left}}: Distribution of focus ratio $R_t$ on test samples. \textbf{\textit{Right}}: Relationship between redundant token count and predictive performance.}
  \label{fig:e1}
\end{figure}

\begin{figure}[!tbp]
  \centering
  \includegraphics[width=\columnwidth]{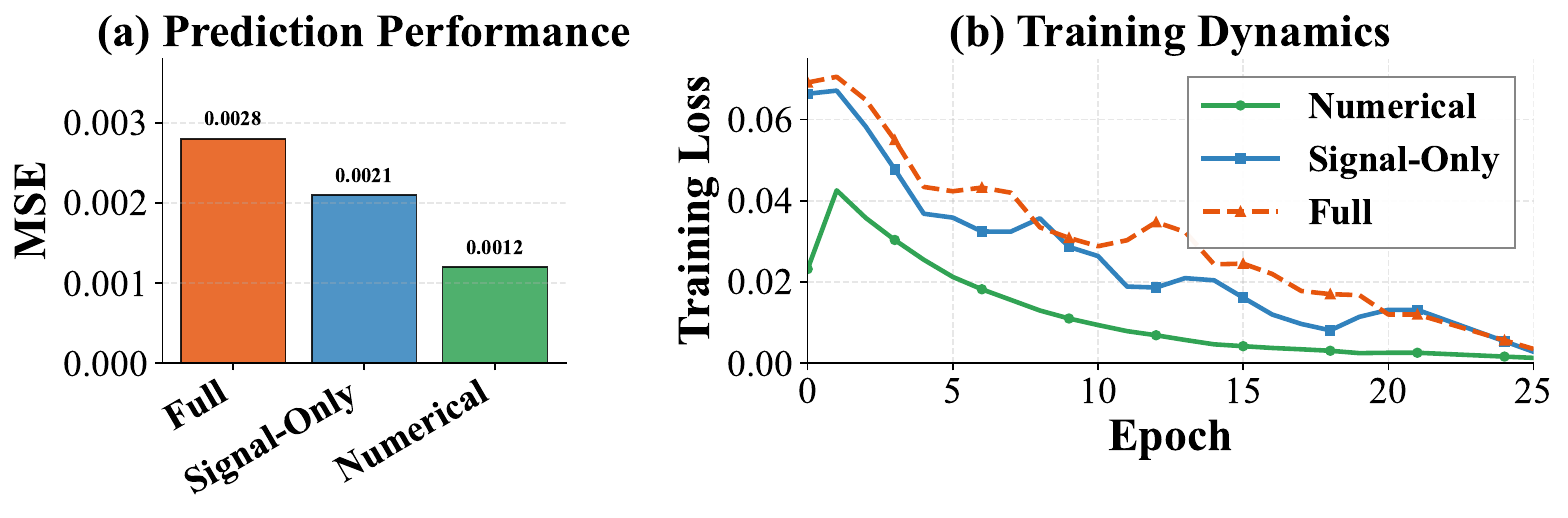}
  \vspace{-0.05in}
  \caption{Comparison of three input variants. \textbf{\textit{Left}}: Prediction performance (MSE) across Full, Signal-Only, and Numerical inputs. \textbf{\textit{Right}}: Training loss curves showing convergence dynamics.}
  \label{fig:e2}
\end{figure}
To investigate how the modality gap affects fusion, we construct a semi-synthetic benchmark where text is guaranteed to contain predictive signals, enabling controlled evaluation of information utilization. Specifically, we select FNSPID \citep{dong2024fnspid}, a financial dataset exhibiting pronounced non-stationarity, and extract real numerical sequences as the temporal modality input $\mathbf{X}_{\text{time}}$. For each sample, we further extract a set of statistical features characterizing the evolution patterns from its future window, and leverage GPT-5.2 \citep{singh2025openai} to map these features into natural language $\mathbf{X}_{\text{text}}$ temporally aligned with the sample. 

The traceable generation process enables automatic token-level annotation: tokens encoding future statistical features form the signal set $\mathcal{T}_{\text{sig}}$, while context-only tokens form the redundant set $\mathcal{T}_{\text{red}}$. Since prediction-relevant information is typically sparsely distributed within natural language, we generally have $|\mathcal{T}_{\text{sig}}| \ll |\mathcal{T}_{\text{red}}|$. This design preserves the naturalistic properties of text while introducing controlled interference, enabling us to quantitatively assess in a unified and interpretable setting: when text already contains effective information, whether existing fusion mechanisms can reliably localize $\mathcal{T}_{\text{sig}}$, suppress interference from $\mathcal{T}_{\text{red}}$, and genuinely translate critical semantics into predictive gains.

\paragraph{Analysis of Attention Distraction} 
We first examine whether the fusion mechanism can correctly identify predictive signals within text. To quantify the rationality of attention allocation, we define the focus ratio:
\begin{align}
R_t &= \log \frac{\bar{\alpha}_{\text{sig}}}{\bar{\alpha}_{\text{red}}}, \label{eq:focus-ratio}\\
\bar{\alpha}_{\text{sig}} &= \frac{1}{|\mathcal{T}_{\text{sig}}|}{\textstyle\sum\nolimits_{i \in \mathcal{T}_{\text{sig}}}} \alpha_{t,i},\;
\bar{\alpha}_{\text{red}} = \frac{1}{|\mathcal{T}_{\text{red}}|}{\textstyle\sum\nolimits_{j \in \mathcal{T}_{\text{red}}}} \alpha_{t,j}, \nonumber
\end{align}
Here $\alpha_{t,i}$ denotes the cross-attention weight on token $i$. If the model correctly localizes predictive signals, we expect $R_t > 0$; conversely, $R_t < 0$ indicates that attention is dominated by redundant information. \cref{fig:e1} (\textbf{\textit{Left}}) presents the distribution of $R_t$ on test samples: even among samples where text yields positive predictive gains, the vast majority exhibit $R_t < 0$, indicating that the model systematically over-attends to redundant tokens rather than predictive signals. \cref{fig:e1} (\textbf{\textit{Right}}) corroborates this finding: samples with negative gains exhibit significantly more redundant tokens, confirming that redundancy hampers textual information extraction.% Therefore, \paragraph{Redundant Descriptions Cause Attention Misalignment.}

\paragraph{Analysis of Representational Mismatch}
%\paragraph{Implicit, Non-Quantifiable Descriptions Fail to Translate into Predictive Signals.}
The preceding analysis demonstrates that redundant tokens interfere with signal extraction. A natural question arises: \textit{if redundancy is entirely removed, can the model effectively utilize predictive information in text?} To this end, we construct three variants: \textsc{Full} (complete text), \textsc{Signal-Only} (only $\mathcal{T}{\text{sig}}$), and \textsc{Numerical} (statistical features as exogenous inputs). \cref{fig:e2} (\textbf{\textit{Left}}) presents the prediction performance comparison across the three variants. \textsc{Signal-Only} achieves lower MSE than \textsc{Full}; however, its performance remains significantly inferior to \textsc{Numerical}, indicating that even when redundancy is completely removed, textual signals still cannot be effectively translated into predictive gains. \cref{fig:e2} (\textbf{\textit{Right}}) further corroborates this finding through training curves: \textsc{Numerical} converges rapidly with smooth trajectories, \textsc{Signal-Only} exhibits pronounced convergence lag and loss oscillation, while \textsc{Full} displays the most unstable optimization dynamics. These observations collectively reveal the fundamental bottleneck of cross-modal transformation---natural language tends to characterize temporal evolution in implicit, qualitative terms (e.g., ``significant rise'' rather than ``+15.3\%''), and such semantics are difficult for existing fusion mechanisms to decode into quantitative signals usable for numerical prediction.

\section{Methodology}

As shown in Figure~\ref{fig:algorithm}, we propose \method{}, a two-stage framework built around a \emph{Temporal Evolution Semantic Space} that distills natural language into quantifiable temporal primitives for stable cross-modal fusion. We first define this space (\cref{sec:semantic-space}), then introduce confidence-aware text-to-primitive projection (\cref{sec:stage1}), and finally present primitive-conditioned forecasting (\cref{sec:stage2}).

We cast text-augmented multimodal forecasting as a semantic bottleneck: instead of directly fusing token-level features
$\mathbf{H}_{\text{text}}=\phi_{\text{text}}(\mathbf{X}_{\text{text}})$,
we compress $\mathbf{X}_{\text{text}}$ into a compact set of temporal primitives and let the forecaster condition on these primitives together with
$\mathbf{H}_{\text{time}}=\phi_{\text{time}}(\mathbf{X}_{\text{time}})$.
Theorem~\ref{thm:main} specifies when this constraint preserves predictive information and improves generalization by reducing dependence on token-level variation.

% We view multimodal forecasting with text as a semantic bottleneck problem: instead of letting $f_\theta$ directly fuse token-level text representations $\mathbf{H}_{\text{text}}=\phi_{\text{text}}(\mathbf{X}_{\text{text}})$,we first distill $\mathbf{X}_{\text{text}}$ into a compact set of temporal primitives and let the forecaster condition on these primitives together with $\mathbf{H}_{\text{time}}=\phi_{\text{time}}(\mathbf{X}_{\text{time}})$.The theorem \ref{thm:main} below formalizes when this restriction preserves predictive information and can improve generalization by reducing dependence on spurious token-level variation.

\begin{theorem}\label{thm:main}
Assume semantic sufficiency (Assump.~\ref{assump:semantic-suff}) in our forecasting setting:
$\hat{\mathbf{Y}}_t \perp\!\!\!\perp \mathbf{X}_{\text{text}} \mid (\mathbf{P}_t,\mathbf{X}_{\text{time}})$,
where $\mathbf{P}_t$ denotes the distilled primitives.
Then $\forall$ encoders $f:\mathbf{X}_{\text{text}}\!\mapsto Z$, $\exists$ an encoder $\tilde f:\mathbf{P}_t\!\mapsto \tilde Z$ s.t.
\[
I(\tilde Z;\hat{\mathbf{Y}}_t\mid \mathbf{X}_{\text{time}})
= I(Z;\hat{\mathbf{Y}}_t\mid \mathbf{X}_{\text{time}}),
\]
\[
I(\tilde Z;\mathbf{X}_{\text{text}})
\le I(Z;\mathbf{X}_{\text{text}}).
\]

Moreover, under standard sub-Gaussian loss and i.i.d.\ sampling Assumption ~\ref{ass:subgaussian}, and ~\ref{ass:independent-samples},
\[
\textsc{Gen}(\tilde Z)\le \textsc{Gen}(Z),
\]
where $\textsc{Gen}(\cdot)$ denotes the expected population--empirical loss gap of a predictor trained on the representation.
\end{theorem}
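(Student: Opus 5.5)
The plan is to build $\tilde Z$ from $Z$ by passing through the primitives. The natural candidate is the Bayes-optimal ``projection'' of $Z$ onto what $\mathbf{P}_t$ can express. Concretely, given any encoder $f$ producing $Z=f(\mathbf{X}_{\text{text}})$ (possibly stochastic), define $\tilde Z$ by sampling from the conditional law of $Z$ given $(\mathbf{P}_t,\mathbf{X}_{\text{time}})$. Equivalently, $\tilde f$ outputs a draw from $p(z\mid \mathbf{P}_t,\mathbf{X}_{\text{time}})$, using fresh randomness independent of everything else. Since the statement conditions on $\mathbf{X}_{\text{time}}$ throughout, I will allow $\tilde f$ to see $\mathbf{X}_{\text{time}}$ as side information. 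If a purely $\mathbf{P}_t$-measurable encoder is insisted upon, I would instead take $\tilde Z=\mathbf{P}_t$ itself (or any sufficient statistic of it) and compare mutual informations directly.

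\textbf{Information preservation.} By construction, $(\tilde Z,\mathbf{P}_t,\mathbf{X}_{\text{time}})$ has the same joint law as $(Z,\mathbf{P}_t,\mathbf{X}_{\text{time}})$. Assumption~\ref{assump:semantic-suff} states that $\hat{\mathbf{Y}}_t$ depends on the text only through $(\mathbf{P}_t,\mathbf{X}_{\text{time}})$. It follows that both $Z$ and $\tilde Z$ relate to $\hat{\mathbf{Y}}_t$ only through $(\mathbf{P}_t,\mathbf{X}_{\text{time}})$; I will show that the two kernels from $(\mathbf{P}_t,\mathbf{X}_{\text{time}})$ coincide.

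\begin{itemize}
\item For $\tilde Z$, the Markov chain $\hat{\mathbf{Y}}_t - (\mathbf{P}_t,\mathbf{X}_{\text{time}}) - \tilde Z$ holds because of the fresh randomness.
\item For $Z$, the chain $Z - \mathbf{X}_{\text{text}} - \hat{\mathbf{Y}}_t$ holds given $\mathbf{X}_{\text{time}}$. Combined with the sufficiency assumption, this yields $Z - (\mathbf{P}_t,\mathbf{X}_{\text{time}}) - \hat{\mathbf{Y}}_t$, taking $\mathbf{P}_t$ to be a function of $\mathbf{X}_{\text{text}}$.
\end{itemize}

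Hence the joint laws of $(\tilde Z,\hat{\mathbf{Y}}_t,\mathbf{X}_{\text{time}})$ and $(Z,\hat{\mathbf{Y}}_t,\mathbf{X}_{\text{time}})$ agree, which gives the equality of conditional mutual informations.

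\textbf{Compression.} The chain $\mathbf{X}_{\text{text}} - (\mathbf{P}_t,\mathbf{X}_{\text{time}}) - \tilde Z$ holds by construction. By data processing, $I(\tilde Z;\mathbf{X}_{\text{text}})\le I(\mathbf{P}_t,\mathbf{X}_{\text{time}};\mathbf{X}_{\text{text}})$, but this is not directly comparable to $I(Z;\mathbf{X}_{\text{text}})$. I will argue in two steps:
\begin{enumerate}
\item $I(Z;\mathbf{X}_{\text{text}}) \ge I(Z;\mathbf{P}_t,\mathbf{X}_{\text{time}}) = I(\tilde Z;\mathbf{P}_t,\mathbf{X}_{\text{time}})$. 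The inequality holds when $(\mathbf{P}_t,\mathbf{X}_{\text{time}})$ is (approximately) a function of $\mathbf{X}_{\text{text}}$ together with the time input; the equality follows from the equal joint laws.
\item $I(\tilde Z;\mathbf{P}_t,\mathbf{X}_{\text{time}}) \ge I(\tilde Z;\mathbf{X}_{\text{text}})$, by the Markov chain above.
\end{enumerate}

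\textbf{Main obstacle.} I expect step 1 to be the hardest part, since it requires that $\mathbf{X}_{\text{time}}$ carry no extra information about $Z$ beyond the text. My first attempt is to state all quantities conditionally on $\mathbf{X}_{\text{time}}$, so that $\mathbf{P}_t$ is a deterministic function of $\mathbf{X}_{\text{text}}$ (the LLM output with fixed decoding). Under that framing the chain $\mathbf{X}_{\text{text}}\to\mathbf{P}_t\to\tilde Z$ makes step 1 a clean data-processing step.

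\textbf{Generalization.} I will invoke the standard mutual-information generalization bound of Xu--Raginsky type under Assumptions~\ref{ass:subgaussian} and \ref{ass:independent-samples}: $\textsc{Gen}\le\sqrt{2\sigma^2 I(W;S)/n}$. The representation-level version bounds the gap by a monotone function of the information the representation retains about the input, $I(\cdot;\mathbf{X}_{\text{text}})$. Applying the compression inequality then yields $\textsc{Gen}(\tilde Z)\le \textsc{Gen}(Z)$ at the level of these bounds. I would state explicitly that the comparison is between the upper bounds, unless the paper's definition of $\textsc{Gen}$ is itself bound-based.
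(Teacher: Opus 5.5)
Your construction is the paper's: $\tilde p(z\mid x)=p(z\mid v,s)=\sum_{x':\phi(x')=v}p(x'\mid v,s)\,p(z\mid x')$, i.e.\ a fresh draw from the law of $Z$ given $(\mathbf P_t,\mathbf X_{\text{time}})$. The paper likewise lets $\tilde f$ read $\mathbf X_{\text{time}}$. Your equality argument is the paper's computation of $p(z,y,s)$, phrased through Markov chains. Your compression chain $I(\tilde Z;\mathbf X_{\text{text}})\le I(\tilde Z;\mathbf P_t,\mathbf X_{\text{time}})=I(Z;\mathbf P_t,\mathbf X_{\text{time}})\le I(Z;\mathbf X_{\text{text}})$ is what the paper's convexity-of-KL step amounts to; the quantity the paper computes and labels $I_{\tilde p}(Z;X)$ is really your middle term.

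The gap is that you leave step 1 open, and the obstacle you name is not one. Because $f$ reads only the text, $Z\perp\!\!\!\perp(\mathbf X_{\text{time}},\hat{\mathbf Y}_t)\mid\mathbf X_{\text{text}}$. This is the same property that gives your chain $Z-\mathbf X_{\text{text}}-\hat{\mathbf Y}_t$, and the paper encodes it in the factorization $p(x,s,y,z)=p(z\mid x)p(y\mid x,s)p(x,s)$. Hence $I(Z;\mathbf X_{\text{time}}\mid\mathbf X_{\text{text}})=0$, and
\[
I(Z;\mathbf X_{\text{text}})=I(Z;\mathbf X_{\text{text}},\mathbf X_{\text{time}})\ge I(Z;\mathbf P_t,\mathbf X_{\text{time}})
\]
holds exactly, by data processing, since $(\mathbf P_t,\mathbf X_{\text{time}})$ is a deterministic function of $(\mathbf X_{\text{text}},\mathbf X_{\text{time}})$. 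Your workaround of conditioning on $\mathbf X_{\text{time}}$ proves only $I(\tilde Z;\mathbf X_{\text{text}}\mid\mathbf X_{\text{time}})\le I(Z;\mathbf X_{\text{text}}\mid\mathbf X_{\text{time}})$, which is a different statement. Getting back to the unconditional one requires adding $I(\tilde Z;\mathbf X_{\text{time}})=I(Z;\mathbf X_{\text{time}})$ and invoking $I(Z;\mathbf X_{\text{time}}\mid\mathbf X_{\text{text}})=0$ anyway. Drop the fallback $\tilde Z=\mathbf P_t$. It gives only $I(\mathbf P_t;\hat{\mathbf Y}_t\mid\mathbf X_{\text{time}})\ge I(Z;\hat{\mathbf Y}_t\mid\mathbf X_{\text{time}})$. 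Also, $I(\mathbf P_t;\mathbf X_{\text{text}})=H(\mathbf P_t)$ exceeds $I(Z;\mathbf X_{\text{text}})=0$ when $f$ is constant and $\mathbf P_t$ is non-degenerate.

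On generalization, your caveat is right, and it exposes a flaw in the paper's own final line. The paper bounds $\textsc{Gen}(Z)$ and $\textsc{Gen}(\tilde Z)$ separately by $\sqrt{2\sigma^2 I/n}$ and then infers $\textsc{Gen}(\tilde Z)\le\textsc{Gen}(Z)$ from the ordering of the informations. With $\textsc{Gen}$ defined as $|\mathbb E[L_{\textsc{pop}}-L_{\textsc{emp}}]|$, only the bounds are ordered.

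There is one more thing to fix in your version. Since $\tilde Z_i$ reads $\mathbf X_{\text{time},i}$, which is correlated with $\hat{\mathbf Y}_{t,i}$, the identity $I(\tilde Z;\mathcal S)=n\,I(\tilde Z;\mathbf X_{\text{text}})$ from Assumption~\ref{ass:independent-samples} is not available for $\tilde Z$; one can have $I(\tilde Z;\hat{\mathbf Y}_t\mid\mathbf X_{\text{text}})>0$. The correct per-sample term is $I(\tilde Z;\mathbf X_{\text{text}},\mathbf X_{\text{time}},\hat{\mathbf Y}_t)=I(\tilde Z;\mathbf P_t,\mathbf X_{\text{time}})$, using the fresh randomness of $\tilde f$. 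Your chain already bounds this by $I(Z;\mathbf X_{\text{text}})$, so plug in that middle term rather than $I(\tilde Z;\mathbf X_{\text{text}})$ in the bound for $\tilde Z$.
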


\noindent\emph{Proof.} See Appendix~\ref{app:theory}

% Here, we motivate by viewing multimodal forecasting with text as a semantic bottleneck problem:
% instead of letting the forecaster rely on token-level text $X$, we force it to depend on a compact semantic interface
% $V=\phi(X)$ together with the time-series context $S$.
% Below we state a theorem showing that, under a mild semantic sufficiency condition, this restriction incurs no loss in
% predictive information and can tighten generalization by reducing dependence on spurious token-level variation.

\begin{figure*}[t]
  \centering
  \includegraphics[width=0.9\textwidth]{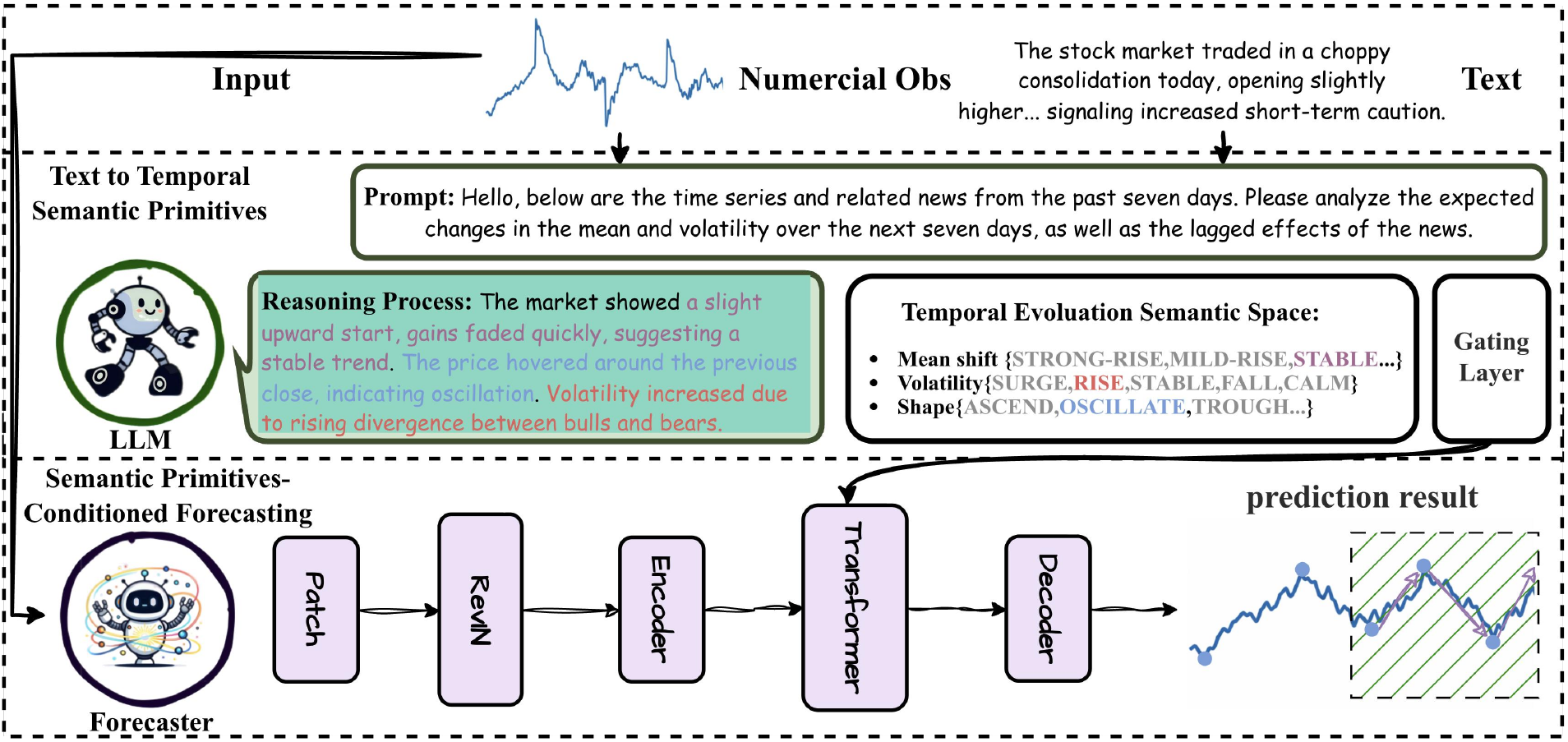}
  \caption{Overview of \method{}. Given numerical observations and associated text, a frozen LLM extracts temporal evolution primitives (e.g., mean shift, shape, volatility, lag) via structured prompting. These primitives, after confidence-aware gating, condition a Transformer-based forecaster that fuses semantic signals with encoded historical sequences to produce numerical predictions.}
  \label{fig:algorithm}
\end{figure*}

\subsection{Temporal Evolution Semantic Space}
\label{sec:semantic-space}

To overcome the limitations of directly mapping text to numerical sequences, we introduce an intermediate representation---the \emph{Temporal Evolution Semantic Space}---that mediates between textual and numerical modalities. The design rationale derives from established practices in domain-specific forecasting: rather than processing descriptive details verbatim, practitioners extract implications for temporal statistical properties. Guided by this insight, we distill three categories of statistically critical features from the classical forecasting literature and formalize them as \emph{Temporal Semantic Primitives} (TSPs). The first two primitives characterize \emph{what} aspects of the time series evolve, while the third captures \emph{when} and \emph{for how long} these impacts manifest. %$\mathcal{P} = \{p_1, \ldots, p_K\}$, where each primitive $p_k$ corresponds to a discrete value domain $\mathcal{V}_k$ and a deterministic mapping $\psi_k$.We instantiate $K=4$ primitives as follows.

\textbf{(1)~Distribution Shift Primitives.}
Distribution shift---temporal changes in statistical properties---is identified by \citet{kim2022reversible} as a principal source of forecasting degradation in non-stationary time series. We characterize such shifts using two complementary primitives capturing changes in mean level and volatility. Let $\mathbf{X}_t \in \mathbb{R}^L$ and $\mathbf{Y}_t \in \mathbb{R}^H$ denote the observation and forecast windows. The mean shift is quantified by the standardized difference $\Delta_\mu = (\bar{Y}_t - \bar{X}_t)/\sigma(X_t)$, where $\bar{X}_t = \tfrac{1}{L}\sum_i x_i$ and $\bar{Y}_t = \tfrac{1}{H}\sum_j y_j$. 

Volatility shift is measured by $r_\sigma = \log [(\sigma_Y + \epsilon)/(\sigma_X + \epsilon)]$, where $\sigma_X = \text{std}(\Delta \mathbf{X}_t)$, $\sigma_Y = \text{std}(\Delta \mathbf{Y}_t)$, and $\Delta$ denotes the first-order differencing operator. Both shift measures are discretized using adaptive thresholds $\tau_1 < \tau_2$ defined by training-set quantiles.

\begin{comment}

Distribution shift---the temporal evolution of statistical properties---is identified by \citet{kim2022reversible} as a principal source of forecasting degradation in non-stationary time series. We capture this phenomenon through two complementary primitives that characterize changes in \emph{mean level} and \emph{volatility}, respectively. Let $\mathbf{X}_t \in \mathbb{R}^L$ and $\mathbf{Y}_t \in \mathbb{R}^H$ denote the observation and forecast windows.
 
\emph{Mean Shift Primitive $p_{\text{mean}}$.} This primitive captures both the \emph{direction} and \emph{magnitude} of level changes between windows. We compute window means $\bar{X}_t = \frac{1}{L}\sum_{i} x_i$ and $\bar{Y}_t = \frac{1}{H}\sum_{j} y_j$, and define the standardized shift $\Delta_\mu = (\bar{Y}_t - \bar{X}_t) / \sigma(X_t)$. Partitioning via thresholds $\tau_1 < \tau_2$ yields:
\end{comment}
\vspace{-0.3em}
\begin{center}
\resizebox{0.95\columnwidth}{!}{%
\begin{tabular}{@{}clll@{}}
\toprule
& \textbf{Category} & \textbf{Condition} & \textbf{Semantics} \\
\midrule
\glyphStrongRise & \textsc{strong-rise} & $\Delta_\mu (r_\sigma) > \tau_2$ & Significant increase \\
\glyphMildRise & \textsc{mild-rise} & $\tau_1 < \Delta_\mu (r_\sigma) \le \tau_2$ & Moderate increase \\
\glyphStable & \textsc{stable} & $-\tau_1 \le  \Delta_\mu (r_\sigma) \le \tau_1$ & Level unchanged \\
\glyphMildDrop & \textsc{mild-drop} & $-\tau_2 \le \Delta_\mu (r_\sigma) < -\tau_1$ & Moderate decrease \\
\glyphStrongDrop & \textsc{strong-drop} & $\Delta_\mu (r_\sigma) < -\tau_2$ & Significant decrease \\
\bottomrule
\end{tabular}}
\end{center}
\vspace{0.3em}

\begin{comment}
    \emph{Volatility Primitive $p_{\text{vol}}$.} Non-stationary series frequently exhibit pronounced heteroskedasticity: volatility clustering \citep{zhao2024garch} and regime switching \citep{ansari2021redsds} alter the uncertainty structure of future observations. This primitive captures regime transitions by comparing realized volatility between windows. We compute first-difference standard deviations $\sigma_X = \text{std}(\Delta \mathbf{X}_t)$ and $\sigma_Y = \text{std}(\Delta \mathbf{Y}_t)$, where $\Delta x_i = x_i - x_{i-1}$, and define the relative volatility shift $r_\sigma = \log\bigl((\sigma_Y + \epsilon)/(\sigma_X + \epsilon)\bigr)$:
\vspace{-0.3em}
\begin{center}
\resizebox{0.95\columnwidth}{!}{%
\begin{tabular}{@{}clll@{}}
\toprule
& \textbf{Category} & \textbf{Condition} & \textbf{Semantics} \\
\midrule
\glyphSurge & \textsc{surge} & $r_\sigma > \tau_2$ & Volatility spike \\
\glyphVolRise & \textsc{rise} & $\tau_1 < r_\sigma \le \tau_2$ & Volatility increase \\
\glyphVolStable & \textsc{stable} & $|r_\sigma| \le \tau_1$ & Volatility unchanged \\
\glyphVolFall & \textsc{fall} & $-\tau_2 \le r_\sigma < -\tau_1$ & Volatility decrease \\
\glyphCalm & \textsc{calm} & $r_\sigma < -\tau_2$ & Volatility collapse \\
\bottomrule
\end{tabular}}
\end{center}
\vspace{0.3em}

For both primitives, thresholds $\tau_1 < \tau_2$ are set as training-set quantiles to ensure adaptive categorization.
\end{comment}

\textbf{(2)~Shape Primitive $p_{\text{shape}}$.}
%While mean shift captures overall level changes
\citet{nie2022time} demonstrate that patch-level morphology conveys rich predictive information. The shape primitive encodes the \emph{inter-patch trend sequence} to characterize internal evolution structure. Concretely, we partition $\mathbf{Y}_t = (y_1, \ldots, y_H)$ into $N_{\text{fcst}}$ equally-sized patches of length $L_u = H/N_{\text{fcst}}$. For each patch $i$, we compute its mean $\bar{u}_i = \frac{1}{L_u}\sum_{j=(i-1)L_u+1}^{iL_u} y_j$ and define inter-patch trend signs $s_i = \text{sgn}_\tau(\bar{u}_{i+1} - \bar{u}_i)$, where $\text{sgn}_\tau(x) = +1$ if $x > \tau$, $-1$ if $x < -\tau$, and $0$ otherwise. The dominant pattern of $(s_1, \ldots, s_{N_{\text{fcst}}-1})$ determines the shape category:
\vspace{-0.3em}
\begin{center}
\resizebox{0.95\columnwidth}{!}{%
\begin{tabular}{@{}clll@{}}
\toprule
& \textbf{Category} & \textbf{Condition} & \textbf{Semantics} \\
\midrule
\glyphAscend & \textsc{ascend} & $\forall i{:}\, s_i \!\ge\! 0 \land \exists i{:}\, s_i \!=\! {+}1$ & Sustained uptrend \\
\glyphDescend & \textsc{descend} & $\forall i{:}\, s_i \!\le\! 0 \land \exists i{:}\, s_i \!=\! {-}1$ & Sustained downtrend \\
\glyphPeak & \textsc{peak} & rise-then-fall with sign changes & Rise then fall \\
\glyphTrough & \textsc{trough} & fall-then-rise with sign changes & Fall then rise \\
\glyphOscillate & \textsc{oscillate} & multiple sign reversals & Fluctuating \\
\bottomrule
\end{tabular}}
\end{center}
\vspace{0.3em}

\textbf{(3)~Lag and Decay Primitive $p_{\text{lag}}$.}
The same mean increase or volatility surge may exhibit vastly different response dynamics (transient pulse vs.\ long-tailed decay). Inspired by distributed-lag models and impulse response functions \citep{YangVCformer2024}, we propose the lag-and-decay primitive to localize influence timing and persistence within the forecast horizon. Using the same patch partition, we compute influence intensity for each patch: $a_i = |(\bar{u}_i - \bar{X}_t)/\sigma(X_t)| + \alpha |\log((\sigma(\Delta \mathbf{u}_i) + \epsilon)/(\sigma(\Delta \mathbf{X}_t) + \epsilon))|$, and normalize to obtain an influence distribution $\pi_i = a_i / \sum_j a_j$. We then derive three indicators: (i)~centroid $c = \sum_i \pi_i \cdot (i{-}1)/(N_{\text{fcst}}{-}1) \in [0,1]$, where smaller $c$ indicates earlier onset; (ii)~tail mass $d = \sum_{i > i^*} \pi_i$ (with $i^* = \arg\max_i \pi_i$), where larger $d$ indicates stronger persistence; (iii)~peak prominence $q = \max_i \pi_i$, distinguishing concentrated from diffuse effects. Given thresholds $\kappa_1 < \kappa_2$ (timing), $\rho$ (persistence), and $\eta$ (prominence):
\vspace{-0.3em}
\begin{center}
\resizebox{0.95\columnwidth}{!}{%
\begin{tabular}{@{}clll@{}}
\toprule
& \textbf{Category} & \textbf{Condition} & \textbf{Semantics} \\
\midrule
\glyphEarlyFade & \textsc{early-fade} & $q \!>\! \eta,\, c \!\le\! \kappa_1,\, d \!\le\! \rho$ & Early onset, quick decay \\
\glyphEarlyPersist & \textsc{early-persist} & $q \!>\! \eta,\, c \!\le\! \kappa_1,\, d \!>\! \rho$ & Early onset, persistent \\
\glyphMidFade & \textsc{mid-fade} & $q \!>\! \eta,\, \kappa_1 \!<\! c \!\le\! \kappa_2,\, d \!\le\! \rho$ & Mid-horizon pulse \\
\glyphMidPersist & \textsc{mid-persist} & $q \!>\! \eta,\, \kappa_1 \!<\! c \!\le\! \kappa_2,\, d \!>\! \rho$ & Mid-horizon, persistent \\
\glyphLate & \textsc{late} & $q \!>\! \eta,\, c \!>\! \kappa_2$ & Late manifestation \\
\glyphDiffuse & \textsc{diffuse} & $q \le \eta$ & Distributed impact \\
\bottomrule
\end{tabular}}
\end{center}
\vspace{0.3em}

The above design ensures that each primitive $p_k$ possesses \emph{numerical verifiability}: given observation and forecast sequences, the ground-truth value $v_{t,k}$ is uniquely determined via $\psi_k$, thereby furnishing reliable supervision for the gating mechanism. All thresholds are set adaptively based on training-set statistics (e.g., quantiles).

% Theoretically, when the task-relevant content of the raw text can be captured by a semantic variable
% $V=\phi(X)$ such that $Y \perp X \mid (V,S)$ (Assumption~\ref{assump:semantic-suff}),
% one can restrict attention to semantic encoders that depend on $X$ only through $V$ without losing
% predictive information about $Y$ conditional on $S$.
% In particular, Theorem~\ref{thm:ib-restriction} shows such a restriction preserves $I(Z;Y\mid S)$
% while not increasing $I(Z;X)$, and Theorem~\ref{thm:generalization} further implies this can only
% tighten the expected generalization bound under Assumptions~\ref{ass:subgaussian}--\ref{ass:independent-samples}.

\subsection{Text to Temporal Semantic Primitives}
\label{sec:stage1}

This stage describes how to reliably predict the discrete value of each semantic primitive $p_k$ from textual input $s_t$, and estimate its confidence to filter noisy extractions.

\paragraph{LLM-based Primitive Classification.}
Given the finite value domain $\mathcal{V}_k$ of each primitive, we formulate extraction as multi-class classification using a frozen LLM as the classifier. Under a structured prompt $\mathcal{D}_k$ (containing primitive semantic definitions and representative examples), we compute log-likelihood scores for each candidate $v \in \mathcal{V}_k$:
\begin{equation}
  \ell_{t,k}(v) = \log P_{\text{LLM}}(v \mid s_t, \mathcal{D}_k).
  \label{eq:llm-score}
\end{equation}
Applying temperature-scaled softmax yields a categorical distribution and the predicted class:
\begin{align}
  q_{t,k}(v) &= \frac{\exp(\ell_{t,k}(v)/T)}{\sum_{v'} \exp(\ell_{t,k}(v')/T)}, \label{eq:primitive-dist} \\
  \hat{v}_{t,k} &= \arg\max_{v \in \mathcal{V}_k} q_{t,k}(v), \label{eq:primitive-pred}
\end{align}
where $T$ is a temperature coefficient. This procedure yields both the discrete prediction $\hat{v}_{t,k}$ and the full distribution $q_{t,k}(\cdot)$, which provides calibration signals for subsequent confidence estimation.

\paragraph{Confidence-Aware Gating.}
Although LLMs can discriminate based on primitive definitions, real-world text often contains narrative noise and ambiguous expressions that may lead to extraction errors. To mitigate the negative impact of erroneous primitives on downstream predictions \citep{10.24963/ijcai.2025/371}, we introduce a confidence gate $g_{t,k} \in [0,1]$ for each primitive to suppress unreliable semantic injection during inference.

Since each primitive's candidate set $\mathcal{V}_k$ is small, we adopt a simple uncertainty indicator as the calibration signal: the log-probability margin between the top-1 and top-2 candidates. Let $v^{(1)}, v^{(2)}$ denote the classes with highest and second-highest probability under $q_{t,k}(\cdot)$:
\begin{equation}
  m_{t,k} = \log q_{t,k}(v^{(1)}) - \log q_{t,k}(v^{(2)}).
  \label{eq:margin}
\end{equation}
We map the predicted class $\hat{v}_{t,k}$ to an internal semantic vector $\mathbf{h}_{t,k}$ via a learnable embedding matrix $\mathbf{E}_k \in \mathbb{R}^{|\mathcal{V}_k| \times d}$:
\begin{equation}
  \mathbf{h}_{t,k} = \mathbf{E}_k[\hat{v}_{t,k}],
  \label{eq:primitive-embed}
\end{equation}
where each row of $\mathbf{E}_k$ corresponds to one discrete class of primitive $k$, learned end-to-end with the downstream predictor. The gating network fuses the semantic embedding with the margin to estimate confidence:
\begin{equation}
  g_{t,k} = \sigma\bigl(\mathbf{w}_k^\top [\mathbf{h}_{t,k};\, \mathbf{W}_m m_{t,k}] + b_k\bigr),
  \label{eq:gating}
\end{equation}
where $\mathbf{W}_m \in \mathbb{R}^{d}$ projects the scalar margin to the embedding dimension.
Thanks to the numerical verifiability of primitives (\cref{sec:semantic-space}), we obtain ground-truth labels from the forecast window $v^*_{t,k} = \psi_k(\mathbf{Y}_t)$ and construct supervision labels $y_{t,k} = \mathbb{1}[\hat{v}_{t,k} = v^*_{t,k}]$. The gating network is trained with binary cross-entropy:
\begin{equation}
  \mathcal{L}_{\text{gate}} = -\sum_{t,k} \bigl[ y_{t,k} \log g_{t,k} + (1 {-} y_{t,k}) \log (1 {-} g_{t,k}) \bigr].
  \label{eq:gate-loss}
\end{equation}
During inference, we apply soft weighting $\tilde{\mathbf{h}}_{t,k} = g_{t,k} \cdot \mathbf{h}_{t,k}$, allowing high-confidence primitives to dominate predictions while suppressing interference from erroneous extractions. 

% Theoretically, under a mild Lipschitz assumption on the forecaster with respect to the gated primitives,
% the prediction error induced by an incorrect primitive is attenuated proportionally to $g_{t,k}^2$
% (Theorem~\ref{thm:soft_gating_stability}).

\subsection{Semantic Primitives-Conditioned Forecasting}
\label{sec:stage2}

We feed the gated primitives $\tilde h_{t,k}=g_{t,k}h_{t,k}$ into the forecaster to produce $\hat y_t$.
Under a mild Lipschitz assumption on the forecaster, the error induced by an incorrect primitive is attenuated on the order of $g_{t,k}^2$ (Theorem~\ref{thm:soft_gating_stability}). We adopt PatchTST~\cite{nie2022time} as the backbone architecture and introduce a semantic conditioning mechanism.

% This stage fuses numerical time series with gated semantic primitives to generate future predictions. We adopt PatchTST~\cite{nie2022time} as the backbone architecture and introduce a semantic conditioning mechanism.

\paragraph{Time-Series Encoding.}
Following PatchTST \citep{nie2022time}, we first apply instance normalization \citep{kim2022reversible} to the input sequence $\mathbf{x} \in \mathbb{R}^{L}$ to mitigate distribution shift between training and testing: for each instance we compute mean $\mu$ and standard deviation $s$, then normalize before patching. The input is segmented into $N = \lfloor (L - P) / S \rfloor + 1$ patches, where $P$ is the patch length and $S$ is the stride. Each patch is mapped to the $d$-dimensional latent space via a trainable linear projection $\mathbf{W}_p \in \mathbb{R}^{d \times P}$, with learnable positional encodings $\mathbf{W}_{\text{pos}} \in \mathbb{R}^{d \times N}$ added to obtain patch embeddings $\mathbf{E}_{\text{patch}} \in \mathbb{R}^{N \times d}$.

\paragraph{Semantic Primitives-Conditioned Prediction.}
The $K$ gated semantic vectors $\tilde{\mathbf{h}}_{t,k} \in \mathbb{R}^d$ from \cref{sec:stage1} are stacked to form semantic prefix tokens $\mathbf{P} \in \mathbb{R}^{K \times d}$, concatenated with patch embeddings to form the Transformer input:
\begin{equation}
  \mathbf{Z}^{(0)} = [\mathbf{P};\, \mathbf{E}_{\text{patch}}] \in \mathbb{R}^{(K+N) \times d}.
  \label{eq:prefix-input}
\end{equation}
The input sequence is processed by $M$ Transformer encoder layers. Each layer first captures sequential dependencies via $H$-head self-attention:
\begin{equation}
  \text{Attn}(\mathbf{Q}, \mathbf{K}, \mathbf{V}) = \text{Softmax}\left(\frac{\mathbf{Q} \mathbf{K}^\top}{\sqrt{d/H}}\right) \mathbf{V},
  \label{eq:attention}
\end{equation}
where $\mathbf{Q}, \mathbf{K}, \mathbf{V} \in \mathbb{R}^{(K+N) \times (d/H)}$ are obtained via linear projections. Multi-head outputs are concatenated and updated through feed-forward networks with residual connections. This prefix fusion allows semantic information to participate in temporal modeling throughout the attention mechanism. Finally, we extract patch-corresponding outputs $\mathbf{Z}_{\text{out}} \in \mathbb{R}^{N \times d}$, flatten and map to the forecast horizon via an MLP, and apply inverse normalization to recover the original scale:
\begin{equation}
  \hat{\mathbf{y}} = s \cdot \text{MLP}(\text{Flatten}(\mathbf{Z}_{\text{out}})) + \mu \in \mathbb{R}^{H}.
  \label{eq:prediction-head}
\end{equation}

\paragraph{Training Objective.}
The model is trained end-to-end (with the LLM frozen) using MSE loss to measure prediction-target discrepancy, jointly optimized with gating supervision:
\begin{equation}
  \mathcal{L} = \mathcal{L}_{\text{fcst}} + \lambda \mathcal{L}_{\text{gate}}, \quad \mathcal{L}_{\text{fcst}} = \frac{1}{H}\|\hat{\mathbf{y}} - \mathbf{y}\|_2^2.
  \label{eq:loss}
\end{equation}

\section{Experiments}
\label{sec:experiments}
\begin{table*}[t]
  \centering
  \caption{Comparison of baselines for time-series forecasting. Lower scores indicate better performance. \best{Red}: best, \second{Blue}: runner-up.}
  \label{tab:comparison}
  {
  \setlength{\tabcolsep}{4pt}
  \renewcommand{\arraystretch}{1.15}
  \resizebox{\textwidth}{!}{
  \begin{tabular}{l|ccc|ccc|ccc|ccc}
  \toprule
  \multirow{2}{*}{\textbf{Model}} &
    \multicolumn{3}{c|}{\textbf{Bitcoin} (Finance)} &
    \multicolumn{3}{c|}{\textbf{FNSPID} (Finance)} &
    \multicolumn{3}{c|}{\textbf{Electricity} (General)} &
    \multicolumn{3}{c}{\textbf{Environment} (General)} \\
   & MAE & MSE & RMSE & MAE & MSE & RMSE & MAE & MSE & RMSE & MAE & MSE & RMSE \\
  \midrule
  \multicolumn{13}{l}{\textbf{Traditional Models}} \\
  \addlinespace[0.3em]
  TimeMixer & 1.6757 & 4.3725 & 2.0910 & 0.0153 & 0.0017 & 0.0407 & 0.1064 & 0.0252 & 0.1586 & 0.4219 & 0.3693 & 0.6077 \\
  \rowcolor{gray!05}
  TSMixer & 4.1202 & 29.1934 & 5.4031 & 0.0353 & 0.0082 & 0.0907 & 0.1119 & 0.0258 & 0.1606 & 0.4860 & 0.3878 & 0.6227 \\
  Nonstationary  & 1.6003 & 4.3420 & 2.0837 & 0.0154 & 0.0016 & 0.0400 & 0.1066 & 0.0256 & 0.1600 & \best{0.4178} & \best{0.3472} & \best{0.5892} \\
  \rowcolor{gray!05}
  TimesNet & 1.4598 & 3.8229 & 1.9552 & \second{0.0152} & \second{0.0015} & 0.0385 & 0.1035 & \second{0.0242} & \second{0.1557} & 0.4258 & 0.3473 & \second{0.5893} \\
  FEDformer & 1.4197 & 3.2768 & 1.8102 & 0.0163 & 0.0016 & 0.0396 & 0.1144 & 0.0267 & 0.1633 & 0.4513 & 0.3650 & 0.5893 \\
  \rowcolor{gray!05}
  Pyraformer & 1.3994 & 3.6377 & 1.9073 & 0.0168 & 0.0015 & \second{0.0383} & 0.1117 & 0.0261 & 0.1616 & 0.4303 & 0.3649 & 0.6041 \\
  Reformer & 12.4630 & 157.8843 & 12.5652 & 0.0299 & 0.0141 & 0.1188 & 0.1180 & 0.0292 & 0.1709 & 0.4258 & 0.3473 & 0.5893 \\
  \rowcolor{gray!05}
  PatchTST & 1.3615 & 3.2456 & 1.8016 & 0.0160 & 0.0016 & 0.0402 & \second{0.1047} & 0.0243 & 0.1559 & 0.4290 & 0.3706 & 0.6088 \\
  \midrule
  \multicolumn{13}{l}{\textbf{Multimodal Models}} \\
  \addlinespace[0.3em]
  TimeLLM & 1.3940 & 3.4477 & 1.8568 & 0.0158 & 0.0017 & 0.0412 & 0.1074 & 0.0255 & 0.1596 & 0.4237 & 0.3714 & 0.6094 \\
  \rowcolor{gray!05}
  ChatTime & 1.4774 & 3.9389 & 1.9846 & 0.0187 & 0.0018 & 0.0424 & 0.1149 & 0.0279 & 0.1670 & 0.4344 & 0.3787 & 0.6153 \\
  NewsForecasting & \second{1.3593} & \second{3.2036} & \second{1.7898} & 0.0176 & 0.0017 & 0.0412 & 0.1088 & 0.0254 & 0.1593 & 0.4378 & 0.3678 & 0.6064 \\
  \cmidrule(lr){1-13}
  \rowcolor{gray!05}
  Ours & \best{1.1120} & \best{2.2726} & \best{1.5075} & \best{0.0147} & \best{0.0012} & \best{0.0347} & \best{0.1031} & \best{0.0230} & \best{0.1518} & \second{0.4202} & \second{0.3534} & 0.5945 \\
  \rowcolor{gray!02}
  \textit{Gain vs. best (\%)} & $+18.2\%$ & $+29.1\%$ & $+15.8\%$ & $+3.3\%$ & $+20.0\%$ & $+9.9\%$ & $+0.4\%$ & $+5.0\%$ & $+2.5\%$ & $-0.6\%$ & $-1.8\%$ & $-0.9\%$ \\
  \bottomrule
  \end{tabular}}
  }
\end{table*}
\begin{table}[t]
  \centering
  \small
  \caption{Ablation study on the contributions of temporal evolution semantic space (TESS) and gating mechanism. Lower scores indicate better performance. \best{Red}: best performance.}
  \label{tab:ablation}
  \setlength{\tabcolsep}{3pt}
  \renewcommand{\arraystretch}{0.9}
  \begin{tabular}{lcccc}
  \toprule
  \textbf{Dataset} & \textbf{Metric} & \textbf{w/o TESS} & \textbf{w/o Gating} & \textbf{\method{}} \\
  \midrule
  \multirow{3}{*}{\textbf{Bitcoin}} 
    & MAE & 1.6025 & 1.1532 & \best{1.1120} \\
    & MSE & 4.2238 & 2.3556 & \best{2.2726} \\
    & RMSE & 2.0552 & 1.5349 & \best{1.5075} \\
  \midrule
  \multirow{3}{*}{\textbf{FNSPID}} 
    & MAE & 0.0155 & 0.0151 & \best{0.0147} \\
    & MSE & 0.0018 & 0.0015 & \best{0.0012} \\
    & RMSE & 0.0412 & 0.0387 & \best{0.0347} \\
  \midrule
  \multirow{3}{*}{\textbf{Electricity}} 
    & MAE & 0.1214 & 0.1070 & \best{0.1031} \\
    & MSE & 0.0298 & 0.0248 & \best{0.0230} \\
    & RMSE & 0.1726 & 0.1575 & \best{0.1518} \\
  \bottomrule
  \end{tabular}
  \vspace{-0.15in}
\end{table}
\subsection{Experimental Setup}

\paragraph{Benchmark.}
We evaluate \method{} on four real-world datasets spanning financial and general domains. \textbf{Financial time-series datasets} include FNSPID \citep{dong2024fnspid} and Bitcoin, both exhibiting significant event-driven non-stationarity. \textbf{General time-series datasets} include Electricity and Environment, used to verify cross-domain generalization. For all datasets, we strictly follow the standard evaluation protocols and official data splits from the original literature to ensure fair comparison and reproducibility. Detailed dataset statistics are provided in Appendix~\ref{sec:detailed-dataset-statistics}.

\paragraph{Baselines.}
We compare \method{} against two categories of baselines: \cI~\textbf{Unimodal time-series models}, including TimeMixer \citep{wang2024timexer}, TSMixer \citep{chen2023tsmixer}, Nonstationary Transformer \citep{liu2022non}, TimesNet \citep{wu2023timesnet}, FEDformer \citep{zhou2022fedformer}, Pyraformer \citep{liu2022pyraformer}, Reformer \citep{kitaev2020reformer}, and PatchTST \citep{nie2022time}, covering mainstream architectures such as MLP and Transformer variants, all implemented using Time-Series-Library\footnote{\url{https://github.com/thuml/Time-Series-Library}}; \cII~\textbf{Multimodal fusion models}, including TimeLLM \citep{jin2024time}, ChatTime \citep{wang2025chattime}, and NewsForecasting \citep{wang2024news}, all using official implementations. Hyperparameters for all baselines are determined via grid search on the validation set to ensure fair comparison. Detailed baseline configurations are provided in Appendix~\ref{sec:detailed-baseline-configurations}.

\paragraph{Implementation Details.}
We implement all experiments in PyTorch \citep{paszke2019pytorch} on 8 NVIDIA A100 80GB GPUs, using AdamW \citep{loshchilov2017decoupled} with learning rate $\in{1\mathrm{e}{-4},5\mathrm{e}{-4},1\mathrm{e}{-3}}$ and early stopping (patience=10) on validation loss.

\subsection{Main Results}
Table~\ref{tab:comparison} presents the performance comparison between \method{} and all baselines across four datasets. 
\cI~\textbf{Significant improvements on financial datasets.}
On financial datasets exhibiting pronounced non-stationarity (FNSPID and Bitcoin), \method{} achieves substantial performance gains. Specifically, on FNSPID, \method{} improves over the strongest baseline TimesNet by 3.3\%, 20.0\%, and 9.9\% across three metrics. On Bitcoin, \method{} outperforms the strongest baseline NewsForecasting by 18.2\%, 29.1\%, and 15.8\% in terms of MAE, MSE, and RMSE, respectively. These results demonstrate the superiority of the temporal evolution semantic space in capturing event-driven non-stationary dynamics.
\cII~\textbf{Stable performance on general datasets.}
On general datasets, \method{} likewise demonstrates competitive performance. On Electricity, \method{} achieves the best performance across all metrics, improving over the strongest baseline by 0.4\%--5.0\%. On Environment, \method{} attains runner-up performance with less than 1\% gap from the strongest baseline Nonstationary Transformer. These results indicate that \method{} maintains stable predictive capability even in scenarios with relatively mild non-stationarity.

\paragraph{Effectiveness in Non-Stationary Scenarios.}
To further validate \method{}'s performance under different non-stationarity patterns, we extract three subsets from the test set, each representing a non-stationary scenario: \cI~\textbf{Shape transition}, \cII~\textbf{Volatility change}, and \cIII~\textbf{Mean shift}. Figure~\ref{fig:nonstationarity} presents the MSE performance across different methods. 
\method{} achieves consistent improvements across all three non-stationary scenarios, with MSE reductions of 21--52\% over multimodal baselines and 21--45\% over unimodal baselines on FNSPID and Bitcoin.

\subsection{Model Analysis}

\paragraph{Ablation Study.}
We ablate TESS and the gating mechanism on three datasets. Since gating relies on primitives extracted by TESS, removing TESS necessarily entails removing gating as well. Table~\ref{tab:ablation} presents the results. Removing TESS leads to substantial degradation, with MSE increasing by 46.2\%, 29.4\%, and 22.8\% respectively, confirming its effectiveness. In contrast, removing gating incurs more modest increases of 3.7\%, 2.6\%, and 7.5\%, indicating that confidence-aware gating effectively filters extraction errors. These results underscore the necessity of both components: TESS drives performance gains, while gating enhances robustness against LLM extraction errors.

\paragraph{Case Study}
Figure~\ref{fig:case_study} illustrates four representative prediction cases. On Bitcoin, \method{} successfully captures mean shift and trend patterns (e.g., \textsc{mild-rise}, \textsc{trough}); on Electricity, it identifies shape transitions with appropriate temporal localization (e.g., \textsc{mid-fade}, \textsc{late}). In contrast, direct text fusion fails to capture these patterns, leading to substantial prediction deviations. These results confirm that the semantic space effectively bridges implicit textual signals and quantifiable forecasting gains.

\begin{figure}[!tbp]
  \centering
  \includegraphics[width=\columnwidth]{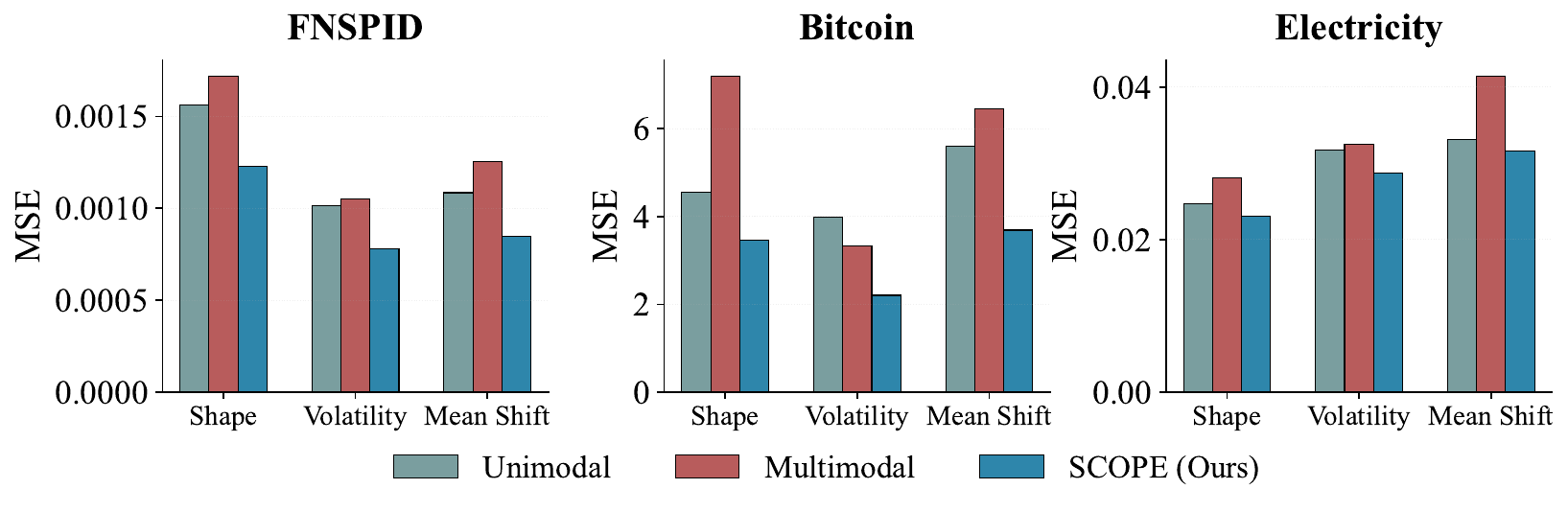}
  \vspace{-0.05in}
  \caption{Performance comparison on three types of non-stationary scenarios (Shape, Volatility, Mean Shift). \method{} (blue) consistently outperforms both unimodal (teal) and multimodal (red) baselines.}
  \label{fig:nonstationarity}
\end{figure}

\begin{figure}[!tbp]
  \centering
  \includegraphics[width=\columnwidth]{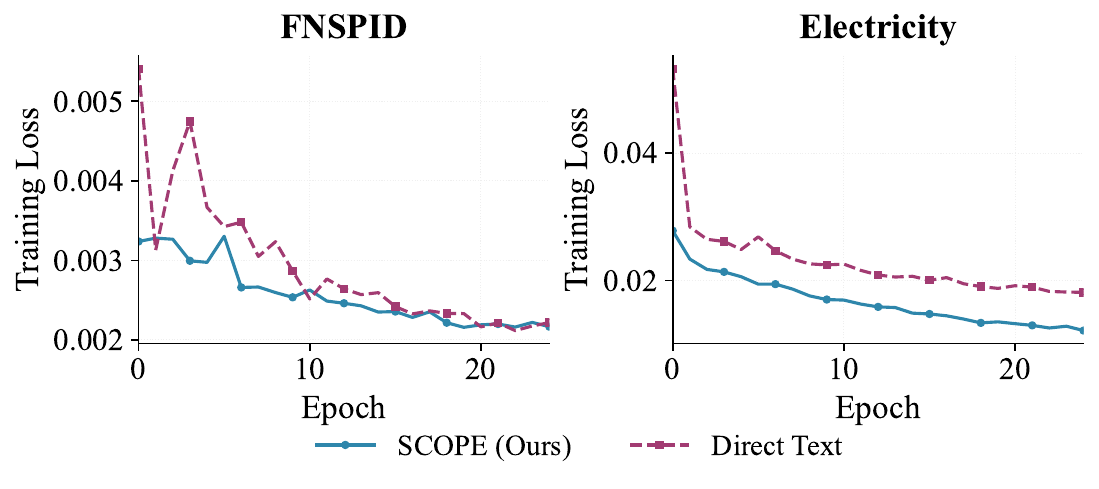}
  \vspace{-0.05in}
  \caption{Training loss curves comparison. \method{} (blue) converges faster with smoother dynamics than direct text fusion (purple).}
  \label{fig:loss_curves}
\end{figure}

\begin{figure}[!tbp]
  \centering
  \includegraphics[width=\columnwidth]{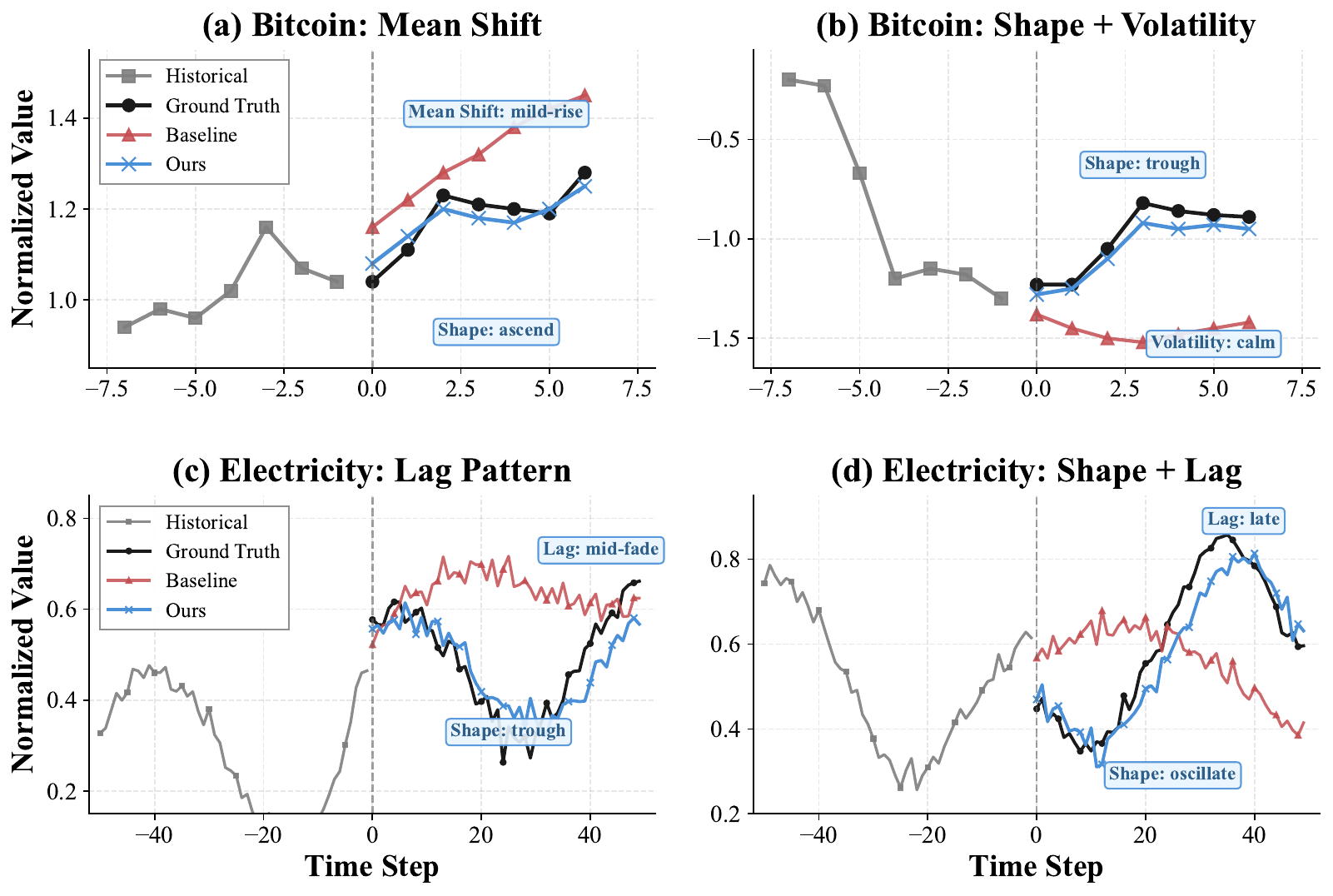}
  \vspace{-0.05in}
  \caption{Case study on Bitcoin and Electricity datasets. \method{} (blue) accurately captures temporal evolution primitives and aligns closely with ground truth (black).}
  \label{fig:case_study}
\end{figure}
\begin{figure}[!tbp]
  \centering
  \includegraphics[width=\columnwidth]{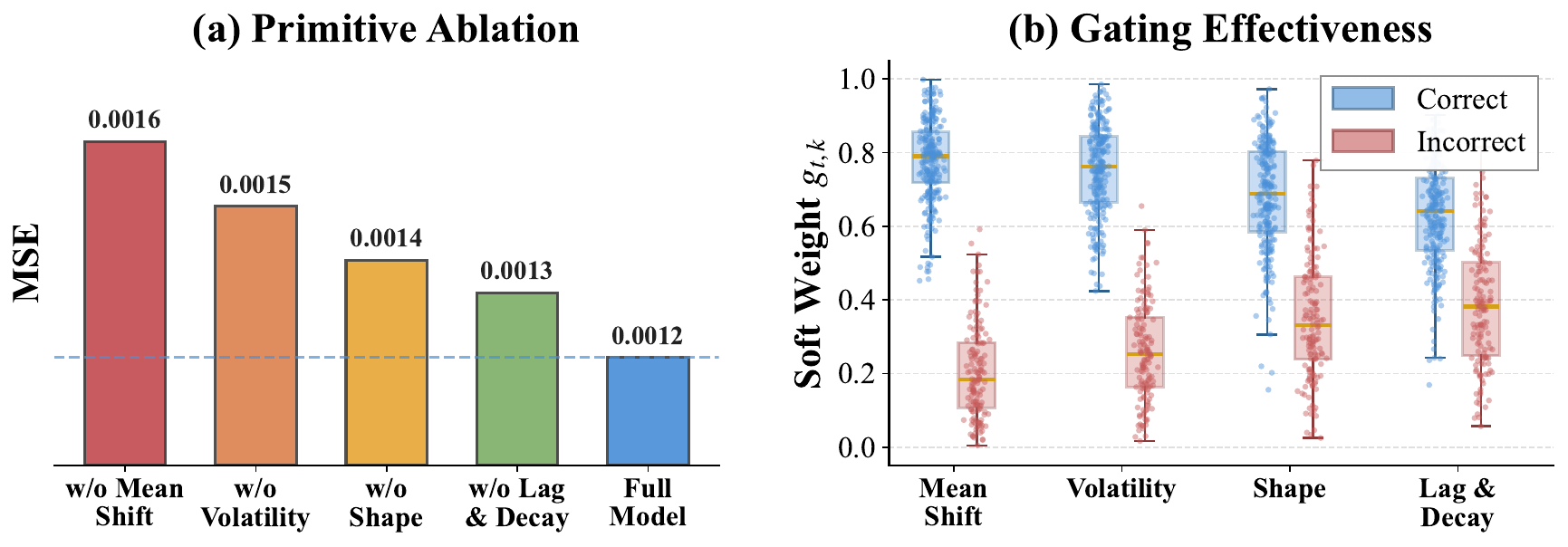}
  \vspace{-0.05in}
  \caption{Analysis of temporal evolution semantic space. \textbf{(a)} Primitive ablation on FNSPID: removing each primitive leads to MSE increase, with Mean Shift having the largest impact. \textbf{(b)} Gating effectiveness: correctly extracted primitives (blue) receive significantly higher soft weights than incorrect ones (red).}
  \label{fig:tess_analysis}
\end{figure}

\paragraph{Effectiveness of Temporal Evolution Semantic Space.}
 First, Figure~\ref{fig:loss_curves} shows that \method{} exhibits faster and smoother convergence than direct text fusion on both FNSPID and Electricity, indicating that the semantic space alleviates cross-modal optimization challenges. Second, Figure~\ref{fig:tess_analysis}(a) presents primitive ablation on FNSPID: removing Mean Shift alone increases MSE by 33\%, underscoring its role in capturing non-stationarity; removing other primitives also incurs performance loss. Best performance is achieved when all four primitives are jointly used, confirming their complementarity.

\paragraph{Effectiveness of Gating Mechanism.}
% we analyze the relationship between gating soft weights and primitive extraction correctness. 
Figure~\ref{fig:tess_analysis}(b) presents the soft weight distributions across the four primitives on the FNSPID dataset. The results demonstrate that gating weights are highly correlated with primitive correctness: correctly extracted samples exhibit median weights predominantly in the range of 0.65--0.78, whereas incorrectly extracted samples show median weights of only 0.21--0.40. This indicates that the gating network successfully learns to map the uncertainty signals from LLM outputs to reliability estimates, suppressing the interference of erroneous primitives on predictions through soft weighting.

\section{Conclusion}

We propose \method{}, a two-stage framework that bridges text-numerical modality gaps through a Temporal Evolution Semantic Space. A confidence-aware gating mechanism further ensures robustness against LLM extraction errors. Extensive experiments across financial and general domains demonstrate that \method{} achieves up to 29\% MSE reduction over state-of-the-art methods.

\section*{Impact Statement}

This paper presents work whose goal is to advance the field of time series analysis and multi model forecasting. There are many potential societal consequences of our work, none
which we feel must be specifically highlighted here.

\bibliography{example_paper}
\bibliographystyle{plainnat}

%%%%%%%%%%%%%%%%%%%%%%%%%%%%%%%%%%%%%%%%%%%%%%%%%%%%%%%%%%%%%%%%%%%%%%%%%%%%%%%
% APPENDIX
%%%%%%%%%%%%%%%%%%%%%%%%%%%%%%%%%%%%%%%%%%%%%%%%%%%%%%%%%%%%%%%%%%%%%%%%%%%%%%%
\newpage
\appendix
\onecolumn
\section{Theorems \& Proofs}
\label{app:theory}

Throughout this section, we write $(X,S,Y):=(\mathbf{X}_{\text{text}},\mathbf{X}_{\text{time}},\hat{\mathbf{Y}}_t)$
and $V:=\phi(X)$.

Let $\mathcal S=\{(X_i,Y_i)\}_{i=1}^n \sim p(x,y)^{\otimes n}$.
Given an encoder $p(z\mid x)$, draw $Z_i\sim p(\cdot\mid X_i)$ independently and
let $Z=(Z_1,\ldots,Z_n)$. Define
\[
L_{\textsc{pop}}(Z):=\mathbb{E}[\ell(Z,Y)],
\qquad
L_{\textsc{emp}}(Z):=\frac{1}{n}\sum_{i=1}^n \ell(Z_i,Y_i).
\]

\begin{assumption}
\label{assump:semantic-suff}
$Y \perp X \mid (V,S)$ (i.e., $p(y\mid x,s)=p(y\mid v,s)$ for $v=\phi(x)$).
\end{assumption}

\begin{assumption}
\label{ass:subgaussian}
$\ell(Z,y)$ is $\sigma$-sub-Gaussian under $Z\sim p(\cdot\mid x)$ for all $(x,y)$.
\end{assumption}

\begin{assumption}
\label{ass:independent-samples}
Conditioned on $\mathcal S$, $Z_i \perp Z_j$ for $i\neq j$; hence
$I(Z;\mathcal S)=\sum_{i=1}^n I(Z_i;X_i)=n\,I(Z;X)$.
\end{assumption}

\begin{theorem*}[Restatement of Theorem~\ref{thm:main}]
Assume semantic sufficiency (Assump.~\ref{assump:semantic-suff}) in our forecasting setting:
$\hat{\mathbf{Y}}_t \perp\!\!\!\perp \mathbf{X}_{\text{text}} \mid (\mathbf{P}_t,\mathbf{X}_{\text{time}})$,
where $\mathbf{P}_t$ denotes the distilled primitives.
Then $\forall$ encoders $f:\mathbf{X}_{\text{text}}\!\mapsto Z$, $\exists$ an encoder $\tilde f:\mathbf{P}_t\!\mapsto \tilde Z$ s.t.
\[
I(\tilde Z;\hat{\mathbf{Y}}_t\mid \mathbf{X}_{\text{time}})
= I(Z;\hat{\mathbf{Y}}_t\mid \mathbf{X}_{\text{time}}),
\]
\[
I(\tilde Z;\mathbf{X}_{\text{text}})
\le I(Z;\mathbf{X}_{\text{text}}).
\]

Moreover, under standard sub-Gaussian loss and i.i.d.\ sampling Assumption ~\ref{ass:subgaussian}, and ~\ref{ass:independent-samples},
\[
\textsc{Gen}(\tilde Z)\le \textsc{Gen}(Z),
\]
where $\textsc{Gen}(\cdot)$ denotes the expected population--empirical loss gap of a predictor trained on the representation.
\end{theorem*}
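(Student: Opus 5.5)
We would build $\tilde Z$ by ``projecting'' the given encoder $f$ onto the primitive variable $V=\phi(X)$, using the notation $(X,S,Y)$ of Appendix~\ref{app:theory}. Given $p(z\mid x)$, define the encoder $\tilde p(\tilde z\mid v):=\int p(z=\tilde z\mid x)\,p(x\mid v)\,dx$. To sample $\tilde Z$, first resample a text $X'$ from $p(\cdot\mid V)$, then apply $f$ to $X'$. This gives the Markov chain $X\to V\to \tilde Z$.

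\textbf{Step 1 (compression).} The first inequality follows from the data-processing inequality along $X\to V\to\tilde Z$, which gives $I(\tilde Z;X)\le I(\tilde Z;V)$. The pair $(\tilde Z,V)$ has the same joint law as $(Z,V)$, so $I(\tilde Z;V)=I(Z;V)\le I(Z;X)$, again by data processing because $V$ is a function of $X$.

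\textbf{Step 2 (preserving relevant information).} For the equality we need the conditional laws of $(Z,Y)$ and $(\tilde Z,Y)$ given $S$ to coincide.
\begin{itemize}
\item Since $Z$ depends only on $X$, Assumption~\ref{assump:semantic-suff} ($Y\perp X\mid (V,S)$) gives the chain $Z - X - (V,S) - Y$. Hence $I(Z;Y\mid S)=I(Z;Y\mid V,S)+I(Z;V\mid S)-I(Z;V\mid Y,S)$ reduces to quantities determined by the joint law of $(Z,V,S,Y)$.
\item $\tilde Z$ has the same conditional law given $V$ as $Z$. So if the resampling uses $p(x\mid v,s)$ rather than $p(x\mid v)$, then $(\tilde Z,V,S,Y)\overset{d}{=}(Z,V,S,Y)$.
\item With equal joint laws, the conditional mutual informations agree, and the equality follows.
\end{itemize}
This requires $\tilde f$ to also see $S$. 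That is harmless because $S=\mathbf{X}_{\text{time}}$ is always available to the forecaster. Under that convention, Step~1 becomes $I(\tilde Z;X)\le I(Z;X)$ conditional on $S$. We would state this explicitly and, if necessary, assume $X\perp S\mid V$ so that the conditioning can be removed.

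\textbf{Step 3 (generalization).} Here we would invoke the standard information-theoretic bound of Xu--Raginsky type. Under Assumption~\ref{ass:subgaussian},
$|\textsc{Gen}(Z)|\le\sqrt{2\sigma^2 I(Z;\mathcal S)/n}$.
Assumption~\ref{ass:independent-samples} gives $I(Z;\mathcal S)=nI(Z;X)$, so the bound becomes $\sqrt{2\sigma^2 I(Z;X)}$. Step~1 then yields that the bound for $\tilde Z$ is no larger than the bound for $Z$.

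\textbf{Main obstacle.} The statement $\textsc{Gen}(\tilde Z)\le\textsc{Gen}(Z)$ compares actual gaps, but the argument only controls the upper bounds. We would therefore interpret $\textsc{Gen}(\cdot)$ as this information-theoretic bound and say so explicitly, since a literal comparison of gaps does not follow.

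A second issue is that Assumption~\ref{ass:subgaussian} must also hold for $\tilde Z$. It does: $\tilde Z\mid x$ is a mixture over $x'$ of the laws $Z\mid x'$, and sub-Gaussianity of $\ell(\cdot,y)$ with a common parameter $\sigma$ is preserved under mixing provided the means agree. Matching of the means is the delicate point. We would either assume $\sigma$-sub-Gaussianity uniformly over the encoder class or bound the mixture directly.
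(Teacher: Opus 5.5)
Your construction and Step~2 match the paper's proof. Once you resample $X'\sim p(\cdot\mid V,S)$, your $\tilde Z$ is exactly the paper's encoder $\tilde p(z\mid x)=p(z\mid \phi(x),s)=\sum_{x':\phi(x')=v}\alpha_{x'\mid v,s}\,p(z\mid x')$, and the paper, like you, lets $\tilde f$ see $S$. Your equal-joint-law argument is the paper's computation $p_{\tilde p}(z,y,s)=p(z,y,s)$.

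The genuine gap is Step~1 for this $S$-aware encoder. You only secure the inequality conditionally on $S$. You then propose assuming $X\perp S\mid V$ to get the unconditional $I(\tilde Z;X)\le I(Z;X)$, which the theorem states and Step~3 needs. That assumption is not in the theorem, and it is unnecessary: your data-processing argument goes through verbatim with $V$ replaced by $(V,S)$. Three facts suffice:
\begin{itemize}
\item $\tilde Z\perp X\mid (V,S)$, by construction;
\item $Z\perp (V,S)\mid X$, because $Z$ is generated from $X$ alone;
\item $(\tilde Z,V,S)\overset{d}{=}(Z,V,S)$, which you already have from Step~2 by marginalizing $Y$.
\end{itemize}
Together they give
\[
I(\tilde Z;X)\;\le\; I(\tilde Z;V,S)\;=\;I(Z;V,S)\;\le\; I(Z;X).
\]
The paper obtains the last inequality from convexity of $D_{\mathrm{KL}}(\cdot\Vert p(z))$ applied to the mixture $p(z\mid v,s)$, after checking that the $Z$-marginal is unchanged. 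The expression the paper equates with $I_{\tilde p}(Z;X)$ is really $I(\tilde Z;V,S)\ge I(\tilde Z;X)$, so the two chains coincide.

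Your Step~3 caveats are legitimate, and the paper does not resolve them. It derives the two Xu--Raginsky bounds, notes that the one for $\tilde Z$ is smaller, and then asserts $\textsc{Gen}(\tilde Z)\le\textsc{Gen}(Z)$. That does not follow from ordering upper bounds. It also applies Assumption~\ref{ass:subgaussian} to $\tilde Z$ without checking that sub-Gaussianity survives the mixing over $x'$, which is exactly the issue you raise. Your $\sqrt{2\sigma^2 I(Z;X)}$ is the correct consequence of $I(Z;\mathcal S)=n\,I(Z;X)$; the paper's display keeps a spurious $1/n$, which is harmless for the comparison. Stating the final claim as a comparison of the bounds, with sub-Gaussianity assumed for both encoders, is the honest form of what either argument proves.
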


\begin{proof}
% We denote by $p$ both the data distribution $p(x,s,y)$ and the original encoder
% distribution $p(z \mid x)$, with the understanding that the factorization
% of the joint over $(X,S,Y,Z)$ is
% \begin{equation}
% p(x,s,y,z) = p(z \mid x)\, p(y \mid x,s)\, p(x,s),
% \end{equation}
% which follows from the chain rule and the assumption that $Z$ is generated
% from $X$ only, i.e., $Z \perp (Y,S)\mid X$.
% The conditional mutual information and mutual information are
% \begin{align}
% I_p(Z;Y \mid S)
% &:= \sum_{s} p(s) \sum_{z,y} p(z,y \mid s)
% \log \frac{p(z,y \mid s)}{p(z \mid s)\, p(y \mid s)},
% \label{eq:def-Izys}
% \\
% I_p(Z;X)
% &:= \sum_{x,z} p(x,z)\log \frac{p(x,z)}{p(x)\,p(z)}
% = \sum_{x} p(x)\sum_z p(z \mid x)\log \frac{p(z \mid x)}{p(z)}.
% \label{eq:def-Izx}
% \end{align}
% Our goal is to construct a new encoder $\tilde{p}(z \mid x)$ and show that
% \eqref{eq:thm-eq}--\eqref{eq:thm-ineq} hold.

% Let $V=\phi(X)$ and define
% \begin{equation}
% p(v,s) := \sum_{x : \phi(x)=v} p(x,s),
% \label{eq:def-pvs}
% \end{equation}
% and, for all $x$ such that $\phi(x)=v$,
% \begin{equation}
% \alpha_{x \mid v,s}
% := \frac{p(x,s)}{p(v,s)}.
% \label{eq:def-alpha}
% \end{equation}
% Then $\sum_{x:\phi(x)=v}\alpha_{x\mid v,s}=1$ and hence
% \begin{align}
% p(z \mid v,s)
% &:= \sum_{x : \phi(x) = v} \alpha_{x \mid v,s}\, p(z \mid x),
% \label{eq:def-pzvs}
% \\
% \sum_z p(z \mid v,s)
% &= \sum_{x : \phi(x) = v} \alpha_{x \mid v,s} \sum_z p(z \mid x)
% = \sum_{x : \phi(x) = v} \alpha_{x \mid v,s}
% = 1,
% \nonumber
% \end{align}
% so $p(z\mid v,s)$ is a valid conditional distribution. 

We use $p$ to denote both the data law $p(x,s,y)$ and the original encoder $p(z\mid x)$.
The induced joint on $(X,S,Y,Z)$ factorizes as
\[
p(x,s,y,z)=p(z\mid x)\,p(y\mid x,s)\,p(x,s),
\]
since $Z\perp (Y,S)\mid X$. Define
\begin{align}
I_p(Z;Y\mid S)
&:=\sum_s p(s)\sum_{z,y} p(z,y\mid s)\log\frac{p(z,y\mid s)}{p(z\mid s)p(y\mid s)},
\label{eq:def-Izys}\\
I_p(Z;X)
&:=\sum_x p(x)\sum_z p(z\mid x)\log\frac{p(z\mid x)}{p(z)}.
\label{eq:def-Izx}
\end{align}

Let $V=\phi(X)$. Define
\[
p(v,s):=\sum_{x:\phi(x)=v}p(x,s),\qquad
\alpha_{x\mid v,s}:=\frac{p(x,s)}{p(v,s)}\ \ ( \phi(x)=v),
\]
so $\sum_{x:\phi(x)=v}\alpha_{x\mid v,s}=1$. Then set
\begin{equation}
p(z\mid v,s):=\sum_{x:\phi(x)=v}\alpha_{x\mid v,s}\,p(z\mid x),
\label{eq:def-pzvs}
\end{equation}
which is a valid conditional distribution. Define the new encoder
\begin{align}
\tilde{p}(z \mid x)
&:= p(z \mid v,s),
\qquad v=\phi(x),
\label{eq:def-tilde-encoder}
\end{align}
which depends on $x$ only through $v=\phi(x)$ (and, in the joint construction, $s$).

Starting from \eqref{eq:def-Izys}, it suffices to prove that the induced joint
over $(Z,Y,S)$ is unchanged:
\begin{equation}
p_{\tilde p}(z,y,s)=p(z,y,s)\ \ \forall z,y,s
\ \Longrightarrow\
I_{\tilde p}(Z;Y\mid S)=I_p(Z;Y\mid S).
\label{eq:suffice-pzys}
\end{equation}
Under $p(z\mid x)$, using semantic sufficiency $p(y\mid x,s)=p(y\mid v,s)$ with $v=\phi(x)$,
\begin{align}
p(z,y,s)
&=\sum_x p(z\mid x)p(y\mid x,s)p(x,s)
=\sum_v p(y\mid v,s)p(v,s)\sum_{x:\phi(x)=v}\alpha_{x\mid v,s}p(z\mid x) \notag\\
&=\sum_v p(y\mid v,s)p(v,s)p(z\mid v,s).
\label{eq:pzys}
\end{align}
Under $\tilde p(z\mid x)=p(z\mid v,s)$,
\begin{align}
p_{\tilde p}(z,y,s)
&=\sum_x \tilde p(z\mid x)p(y\mid x,s)p(x,s)
=\sum_v p(z\mid v,s)p(y\mid v,s)\sum_{x:\phi(x)=v}p(x,s) \notag\\
&=\sum_v p(z\mid v,s)p(y\mid v,s)p(v,s)
= p(z,y,s),
\label{eq:pzys-tilde}
\end{align}
so $I_{\tilde p}(Z;Y\mid S)=I_p(Z;Y\mid S)$.

Next, recall \eqref{eq:def-Izx}. For $p$, define $p(z):=\sum_x p(z\mid x)p(x)$.
Grouping by $(v,s)$ yields
\begin{equation}
I_p(Z;X)
=\sum_{v,s} p(v,s)\sum_{x:\phi(x)=v}\alpha_{x\mid v,s}
\sum_z p(z\mid x)\log\frac{p(z\mid x)}{p(z)}.
\label{eq:IpZX-vs}
\end{equation}
For $\tilde p$, the $Z$-marginal matches:
\begin{equation}
p_{\tilde p}(z)
=\sum_x \tilde p(z\mid x)p(x)
=\sum_{v,s} p(v,s)p(z\mid v,s)
=\sum_{v,s} p(v,s)\sum_{x:\phi(x)=v}\alpha_{x\mid v,s}p(z\mid x)
= p(z),
\label{eq:pz-same}
\end{equation}
hence the denominator is unchanged. Therefore
\begin{equation}
I_{\tilde p}(Z;X)
=\sum_{v,s} p(v,s)\sum_z p(z\mid v,s)\log\frac{p(z\mid v,s)}{p(z)}.
\label{eq:Izx-tilde}
\end{equation}

Define $f(r):=\sum_z r(z)\log\frac{r(z)}{p(z)}=D_{\mathrm{KL}}(r\Vert p(z))$.
By convexity of $D_{\mathrm{KL}}(\cdot\Vert p(z))$ in its first argument, for each fixed $(v,s)$,
\begin{align}
\sum_z p(z\mid v,s)\log\frac{p(z\mid v,s)}{p(z)}
&= f\!\left(\sum_{x:\phi(x)=v}\alpha_{x\mid v,s}\,p(\cdot\mid x)\right)
\le \sum_{x:\phi(x)=v}\alpha_{x\mid v,s}\,f(p(\cdot\mid x))
\notag\\
&=\sum_{x:\phi(x)=v}\alpha_{x\mid v,s}\sum_z p(z\mid x)\log\frac{p(z\mid x)}{p(z)}.
\label{eq:key-inequality}
\end{align}
Multiplying \eqref{eq:key-inequality} by $p(v,s)$ and summing over $(v,s)$ yields
\begin{align}
I_{\tilde p}(Z;X)
&=\sum_{v,s} p(v,s)\sum_z p(z\mid v,s)\log\frac{p(z\mid v,s)}{p(z)}
\le \sum_{v,s} p(v,s)\sum_{x:\phi(x)=v}\alpha_{x\mid v,s}\sum_z p(z\mid x)\log\frac{p(z\mid x)}{p(z)}
\notag\\
&=\sum_{v,s}\sum_{x:\phi(x)=v} p(x,s)\sum_z p(z\mid x)\log\frac{p(z\mid x)}{p(z)}
=\sum_x p(x)\sum_z p(z\mid x)\log\frac{p(z\mid x)}{p(z)}
=I_p(Z;X),
\end{align}

Follow the information-theoretic generalization framework of \citet{xu2017information}.
By Assumption~\ref{ass:subgaussian}, Theorem~1 of \citet{xu2017information} gives
\begin{equation}
\left|
\mathbb{E}\!\left[L_{\textsc{pop}}(Z)-L_{\textsc{emp}}(Z)\right]
\right|
\le
\sqrt{\frac{2\sigma^2}{n}\,I(Z;S)}.
\label{eq:gen-IS}
\end{equation}
By Assumption~\ref{ass:independent-samples},
\begin{equation}
I(Z;S)=\sum_{i=1}^n I(Z_i;X_i)=n\,I(Z;X).
\label{eq:IZS-factor}
\end{equation}
Combining \eqref{eq:gen-IS}--\eqref{eq:IZS-factor} yields
\begin{equation}
\textsc{Gen}(Z)
:=
\left|
\mathbb{E}\!\left[L_{\textsc{pop}}(Z)-L_{\textsc{emp}}(Z)\right]
\right|
\le
\sqrt{\frac{2\sigma^2}{n}\,I_p(Z;X)}.
\label{eq:z-gen-bound}
\end{equation}
Applying the same argument to $\tilde Z$ gives
\begin{equation}
\textsc{Gen}(\tilde Z)
\le
\sqrt{\frac{2\sigma^2}{n}\,I_{\tilde p}(Z;X)}.
\label{eq:z-tilde-gen-bound}
\end{equation}
Finally, since $I_{\tilde p}(Z;X)\le I_p(Z;X)$, we obtain
\[
\textsc{Gen}(\tilde Z)\le \textsc{Gen}(Z).
\]
\end{proof}

\begin{assumption}
\label{assump:lipschitz_gated_primitives}
Fix $t$ and let $\hat Y_t=F(E_{\mathrm{time},t},\tilde h_1,\ldots,\tilde h_K)$ with
$\tilde h_k=g_{t,k}h_k$ and $g_{t,k}\in[0,1]$.
Assume $F$ is coordinate-wise $L_k$-Lipschitz in $\tilde h_k$ (w.r.t.\ $\|\cdot\|$):
\[
\big|F(\ldots,a,\ldots)-F(\ldots,b,\ldots)\big|\le L_k\|a-b\|,
\quad \forall\,a,b\in\mathbb R^d,\ \forall k\in[K].
\]
\end{assumption}

\begin{theorem}\label{thm:soft_gating_stability}
Under Assump.~\ref{assump:lipschitz_gated_primitives}, define
$\Delta_k:=\|h_k^{\err}-h_k^{\true}\|$ and $\tilde h_k^{\true}:=g_{t,k}h_k^{\true}$,
$\tilde h_k^{\err}:=g_{t,k}h_k^{\err}$.
Let $E:=E_{\timev,t}$ and $\hat Y_t^{\true}:=F(E,\tilde h_1^{\true},\ldots,\tilde h_K^{\true})$,
$\hat Y_t^{\err}:=F(E,\tilde h_1^{\err},\ldots,\tilde h_K^{\err})$.
Then
\[
(\hat Y_t^{\err}-\hat Y_t^{\true})^2
\le K\sum_{k=1}^K L_k^2\,g_{t,k}^2\,\Delta_k^2 .
\]
\end{theorem}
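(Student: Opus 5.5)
The plan is a telescoping (hybrid) argument that swaps the gated primitives from their ``true'' to their ``erroneous'' versions one coordinate at a time, followed by Cauchy--Schwarz. For $j=0,1,\ldots,K$ we define the hybrid inputs
\[
\hat Y^{(j)} := F\bigl(E,\tilde h_1^{\err},\ldots,\tilde h_j^{\err},\tilde h_{j+1}^{\true},\ldots,\tilde h_K^{\true}\bigr).
\]
With this definition, $\hat Y^{(0)}=\hat Y_t^{\true}$ and $\hat Y^{(K)}=\hat Y_t^{\err}$, so
\[
\hat Y_t^{\err}-\hat Y_t^{\true}=\sum_{k=1}^K\bigl(\hat Y^{(k)}-\hat Y^{(k-1)}\bigr).
\]
Consecutive hybrids differ only in the $k$-th gated primitive argument. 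All other arguments, including $E$, are held fixed.

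Next we bound each increment using Assumption~\ref{assump:lipschitz_gated_primitives}. Since $F$ is coordinate-wise $L_k$-Lipschitz in its $k$-th primitive argument, uniformly over the values of the other arguments, we get
\[
|\hat Y^{(k)}-\hat Y^{(k-1)}|\le L_k\|\tilde h_k^{\err}-\tilde h_k^{\true}\| = L_k\,\|g_{t,k}(h_k^{\err}-h_k^{\true})\| = L_k\,g_{t,k}\,\Delta_k .
\]
The final equality uses absolute homogeneity of the norm together with $g_{t,k}\in[0,1]$, which gives $|g_{t,k}|=g_{t,k}$. Here it matters that the same gate value $g_{t,k}$ multiplies both $h_k^{\err}$ and $h_k^{\true}$, as stipulated in the theorem. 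This is the only place where gating enters the argument.

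Finally, we combine the increments with the triangle inequality and Cauchy--Schwarz, $(\sum_{k=1}^K a_k)^2\le K\sum_{k=1}^K a_k^2$:
\[
(\hat Y_t^{\err}-\hat Y_t^{\true})^2\le\Bigl(\sum_{k=1}^K L_k g_{t,k}\Delta_k\Bigr)^2\le K\sum_{k=1}^K L_k^2 g_{t,k}^2\Delta_k^2 .
\]

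The main point to check is the hybrid step: the coordinate-wise Lipschitz condition has to hold at mixed argument tuples, some true and some erroneous. The assumption's phrasing with ``$\ldots$'' for the other arguments, quantified over all $a,b$, supports this reading, and I would state it explicitly. If $\hat Y_t$ is vector-valued, the same argument goes through with $|\cdot|$ replaced by a norm. All remaining steps are routine.
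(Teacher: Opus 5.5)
Your proposal is correct and matches the paper's proof step for step. The paper also builds the hybrid tuples $\bz^{(k)}$ that swap the true gated primitives for the erroneous ones one coordinate at a time, telescopes, and applies the coordinate-wise Lipschitz bound to get $\sum_k L_k g_{t,k}\Delta_k$; it then finishes with $(\sum_k a_k)^2\le K\sum_k a_k^2$, and your remark about the Lipschitz condition holding at mixed tuples makes explicit what the paper uses implicitly.
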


\begin{proof}
For $k=0,\ldots,K$, set
$\bz^{(k)}:=(g_{t,1}h_1^{\err},\ldots,g_{t,k}h_k^{\err},g_{t,k+1}h_{k+1}^{\true},\ldots,g_{t,K}h_K^{\true})$,
so $\bz^{(0)}=(\tilde h_1^{\true},\ldots,\tilde h_K^{\true})$ and $\bz^{(K)}=(\tilde h_1^{\err},\ldots,\tilde h_K^{\err})$.
By telescoping and Assump.~\ref{assump:lipschitz_gated_primitives},
\[
|\hat Y_t^{\err}-\hat Y_t^{\true}|
\le\sum_{k=1}^K |F(E,\bz^{(k)})-F(E,\bz^{(k-1)})|
\le\sum_{k=1}^K L_k\,\|g_{t,k}(h_k^{\err}-h_k^{\true})\|
=\sum_{k=1}^K L_k g_{t,k}\Delta_k .
\]
Apply $(\sum_{k=1}^K a_k)^2\le K\sum_{k=1}^K a_k^2$ with $a_k=L_k g_{t,k}\Delta_k$.
\end{proof}

% \begin{theorem}

% \label{thm:gen}
% Let $V=\phi(X)\in\mathcal V$ denote the TESS primitives, where
% $\mathcal V=\mathcal V_1\times\cdots\times\mathcal V_K$ with $|\mathcal V_k|=M_k$ and
% $|\mathcal V|=M:=\prod_{k=1}^K M_k$.
% Consider predictors $\hat Y=g(V)$ with $g:\mathcal V\to[-1,1]$, squared loss
% $\ell(\hat y,y)=(\hat y-y)^2$, and risks
% \[
% R(g)=\mathbb E\,\ell(g(V),Y),\qquad
% \widehat R_n(g)=\frac1n\sum_{i=1}^n \ell(g(V_i),Y_i).
% \]
% Then there exists a universal constant $C>0$ such that for any $\delta\in(0,1)$, with probability at least
% $1-\delta$ over $n$ i.i.d.\ samples,
% \[
% \sup_{g:\mathcal V\to[-1,1]}
% \big|R(g)-\widehat R_n(g)\big|
% \le
% C\!\left(\sqrt{\frac{M}{n}}+\sqrt{\frac{\log(1/\delta)}{n}}\right).
% \]
% Consequently, $n\gtrsim M/\varepsilon^2$ suffices to ensure
% $\sup_g |R(g)-\widehat R_n(g)|\le\varepsilon$.
% \end{theorem}

\begin{theorem}\label{thm:gen}
Let $V=\phi(X)\in\cV$ be the TESS primitives, where
$\cV=\cV_1\times\cdots\times\cV_K$, $|\cV_k|=M_k$, and $|\cV|=:M=\prod_{k=1}^K M_k$.
For $g:\cV\to[-1,1]$, define $\hat Y=g(V)$, squared loss $\ell(\hat y,y)=(\hat y-y)^2$, and
\[
R(g):=\E\,\ell(g(V),Y),\qquad
\hat R_n(g):=\frac1n\sum_{i=1}^n \ell\!\big(g(V_i),Y_i\big).
\]
Then $\exists$ a universal $C>0$ s.t.\ $\forall\delta\in(0,1)$, w.p.\ $\ge 1-\delta$ over $n$ i.i.d.\ samples,
\[
\sup_{g:\cV\to[-1,1]}\big|R(g)-\hat R_n(g)\big|
\;\le\;
C\!\left(\sqrt{\frac{M}{n}}+\sqrt{\frac{\log(1/\delta)}{n}}\right).
\]
In particular, $n\gtrsim M/\e^2 \Rightarrow \sup_g |R(g)-\hat R_n(g)|\le \e$.
\end{theorem}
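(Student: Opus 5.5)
The plan is the standard Rademacher-complexity route: symmetrization, a contraction step to remove the squared loss, and then an explicit computation of the complexity of the class of \emph{all} functions on the finite set $\cV$, which turns out to be $\sqrt{M/n}$. Throughout I will add the (implicit) assumption $|Y|\le 1$ almost surely, matching the range of $g$. Without some bound on $Y$ the squared loss is unbounded and the high-probability statement cannot hold with a universal constant. Under this assumption $\ell(g(v),y)\in[0,4]$ for every $g$ and every sample.

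\textbf{Step 1 (concentration).} Let $\Phi(\mathcal S):=\sup_{g}\big(R(g)-\hat R_n(g)\big)$. Replacing one sample $(V_i,Y_i)$ changes $\Phi$ by at most $4/n$. McDiarmid's inequality then gives, with probability at least $1-\delta/2$,
\[
\Phi\le \E\Phi+4\sqrt{\log(2/\delta)/(2n)}.
\]
The same holds for the reversed difference $\sup_g(\hat R_n(g)-R(g))$, and a union bound gives the two-sided statement.

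\textbf{Step 2 (symmetrization and contraction).} Symmetrization gives $\E\Phi\le 2\,\E\,\mathfrak R_n(\ell\circ\mathcal G)$, where $\mathcal G=[-1,1]^{\cV}$ and $\mathfrak R_n$ is the empirical Rademacher complexity. For each fixed $i$, the map $u\mapsto (u-Y_i)^2$ is $4$-Lipschitz on $[-1,1]$. The Ledoux--Talagrand contraction lemma allows sample-dependent Lipschitz maps $\varphi_i$, applied coordinatewise, so
\[
\mathfrak R_n(\ell\circ\mathcal G)\le 4\,\mathfrak R_n(\mathcal G).
\]

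\textbf{Step 3 (complexity of $\mathcal G$, the key computation).} Group the samples by the value of $V_i$, and let $n_v:=|\{i:V_i=v\}|$. Then
\[
\E_\sigma\sup_{g\in\mathcal G}\frac1n\sum_i\sigma_i g(V_i)
=\frac1n\sum_{v\in\cV}\E_\sigma\sup_{g(v)\in[-1,1]} g(v)\sum_{i:V_i=v}\sigma_i
=\frac1n\sum_{v}\E\Big|\sum_{i:V_i=v}\sigma_i\Big|.
\]
This equality holds because the supremum decouples across $v$: each $g(v)$ is chosen independently. Jensen's inequality bounds each term by $\sqrt{n_v}$. Cauchy--Schwarz with $\sum_v n_v=n$ then gives
\[
\sum_v\sqrt{n_v}\le\sqrt{Mn},
\]
so $\mathfrak R_n(\mathcal G)\le\sqrt{M/n}$ deterministically.

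Combining Steps 1--3 yields
\[
\sup_g|R(g)-\hat R_n(g)|\le 8\sqrt{M/n}+4\sqrt{\log(2/\delta)/(2n)}
\]
with probability at least $1-\delta$. Absorbing $\log 2$ and the numerical factors into $C$ gives the stated form. The ``in particular'' clause follows by setting each term to at most $\varepsilon/2$, treating $\delta$ as fixed.

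\textbf{Main obstacle.} The only delicate points are the boundedness of $Y$, which must be made explicit, and justifying contraction when the Lipschitz maps depend on $Y_i$. For the latter I would cite the version of Ledoux--Talagrand that applies to functions $\varphi_i$ varying with the index $i$, or else condition on the $Y_i$'s and apply it for each fixed realization. Step 3 itself is elementary once the decoupling across cells is noticed.
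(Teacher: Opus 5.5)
Your proposal is correct and follows the paper's proof essentially step for step. Both use the same cell-wise decoupling $\sup_g\sum_i\sigma_i g(V_i)=\sum_{v}\bigl|\sum_{i:V_i=v}\sigma_i\bigr|$, the same Jensen and Cauchy--Schwarz bound $\sqrt{M/n}$, and the same $4$-Lipschitz contraction before a standard Rademacher generalization bound; you simply unpack that bound into McDiarmid, symmetrization and a union bound for the two-sided version, and you make explicit the assumption $|Y|\le 1$, which the paper uses silently (its loss is $4$-Lipschitz and bounded only for $|y|\le 1$) and without which the claim cannot hold with a universal constant.
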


\begin{proof}
Define $\mathcal G:=\{g:\mathcal V\to[-1,1]\}$ and its empirical Rademacher complexity
\[
\widehat{\mathfrak R}_n(\mathcal G)
=
\mathbb E_\sigma\Big[\sup_{g\in\mathcal G}\frac1n\sum_{i=1}^n\sigma_i g(V_i)\Big],
\qquad \sigma_i\overset{iid}\sim\{\pm1\}.
\]

Group samples by primitive value and let
$N_v:=|\{i:V_i=v\}|$ so that $\sum_{v\in\mathcal V}N_v=n$. Then
\[
\sup_{g\in\mathcal G}\sum_{i=1}^n\sigma_i g(V_i)
=
\sup_{g}\sum_{v\in\mathcal V} g(v)\!\sum_{i:V_i=v}\sigma_i
=
\sum_{v\in\mathcal V}\Big|\sum_{i:V_i=v}\sigma_i\Big|.
\]
Taking expectation over $\sigma$ and using
$\mathbb E_\sigma|\sum_{j=1}^m\sigma_j|\le\sqrt{m}$ yields
\[
\widehat{\mathfrak R}_n(\mathcal G)
\le
\frac1n\sum_{v\in\mathcal V}\sqrt{N_v}.
\]
By Cauchy--Schwarz,
\[
\sum_{v\in\mathcal V}\sqrt{N_v}
\le
\sqrt{\Big(\sum_v 1\Big)\Big(\sum_v N_v\Big)}
=
\sqrt{Mn},
\]
hence $\widehat{\mathfrak R}_n(\mathcal G)\le\sqrt{M/n}$.

Since $\ell(\cdot,y)$ is $4$-Lipschitz on $[-1,1]$, the contraction inequality gives
\[
\widehat{\mathfrak R}_n(\ell\circ\mathcal G)\le 4\,\widehat{\mathfrak R}_n(\mathcal G)
\le 4\sqrt{\frac{M}{n}}.
\]
Applying the standard Rademacher generalization bound (e.g.\ Bartlett \& Mendelson, 2002),
with probability at least $1-\delta$,
\[
\sup_{g\in\mathcal G}\big(R(g)-\widehat R_n(g)\big)
\le
2\,\widehat{\mathfrak R}_n(\ell\circ\mathcal G)
+ c\sqrt{\frac{\log(1/\delta)}{n}}
\le
C\!\left(\sqrt{\frac{M}{n}}+\sqrt{\frac{\log(1/\delta)}{n}}\right),
\]
for universal constants $c,C>0$. The same bound holds for the absolute deviation.
\end{proof}

\begin{remark}

Theorem~\ref{thm:gen} gives a complexity term $\sqrt{M/n}$ with $M=\prod_k M_k$, compared to the token-level $\sqrt{\log|\mathcal A_T|/n}$; in particular, for $|\mathcal A_T|=2^T$ this improves $\sqrt{T/n}$ to $\sqrt{(\prod_k M_k)/n}$ when $\prod_k M_k\ll T$.
\end{remark}

% \begin{theorem}[TESS reduces effective complexity]
% \label{thm:tess_simpler}
% Let token-level selection be an $m_{\mathrm{tok}}$-class task. TESS yields $K$ primitive tasks
% with $m_{\max}:=\max_k|\mathcal V_k|$. For a class $\mathcal H$ of complexity $d$, there exists
% $C>0$ such that w.p.\ $\ge 1-\delta$,
% \[
% R_{\mathrm{tok}}(h)\le \widehat R_{\mathrm{tok}}(h)+C\sqrt{\frac{d\log m_{\mathrm{tok}}+\log(1/\delta)}{n}},
% \qquad
% \max_{k\le K}R_k(h)\le \max_{k\le K}\widehat R_k(h)+C\sqrt{\frac{d\log m_{\max}+\log(K/\delta)}{n}},
% \]
% for all $h\in\mathcal H$. Hence achieving excess $\le\varepsilon$ needs
% $n\gtrsim(d\log m_{\mathrm{tok}}+\log(1/\delta))/\varepsilon^2$ vs.
% $n\gtrsim(d\log m_{\max}+\log(K/\delta))/\varepsilon^2$.
% \end{theorem}

% \begin{proof}
% Apply a standard $m$-class bound to the token task ($m=m_{\mathrm{tok}}$).
% For primitives, apply it to each $k$ with confidence $\delta/K$ and union bound; use
% $\log m_k\le\log m_{\max}$. Solve the excess term $\le\varepsilon$ for $n$.
% \end{proof}

\section{ Detailed dataset statistics }
\label{sec:detailed-dataset-statistics}

\begin{table}[H]
  \centering
  \caption{ The Electricity dataset represents the half-hourly electricity demand in a state. FNSPID provides daily stock price data integrated with time-aligned financial news. Environment dataset contains daily Air Quality Index (AQI) measurements and Bitcoin denotes the daily Bitcoin price.}
  \label{tab:appendix-b}
  \small
  \begin{tabular}{|c|>{\centering\arraybackslash}p{2.8cm}|>{\centering\arraybackslash}p{2.8cm}|>{\centering\arraybackslash}p{2.8cm}|>{\centering\arraybackslash}p{2.8cm}|}
  \hline
  \textbf{Datasets} & \textbf{Electicity} & \textbf{Bitcoin} & \textbf{FNSPID} & \textbf{Environment} \\
  \hline
  \textbf{Time Horizon} & 2019.01-2021.12 & 2019.01-2021.06 & Not specified & 1982.01-2012.05 \\
  \hline
  \textbf{Variates} & 19 & 18 & 7 & 4 \\
  \hline
  \textbf{Timesteps} & 52,560 & 858 & 49140 & 11,095 \\
  \hline
  \textbf{Granularity} & 30 minutes & 1 day & 5 days & 7 days \\
  \hline
  \textbf{Input length} & 48 & 7 & 5 & 7 \\
  \hline
  \textbf{Prediction length} & 48 & 7 & 5 & 7 \\
  \hline
  \end{tabular}
\end{table}
We conduct experiments on four real-world multimodal datasets from diverse domains, namely Bitcoin, FNSPID, Electricity, and Environment. Each dataset consists of a target time series accompanied by temporally aligned news text describing relevant external events and contextual information.

The news corpus corresponding to each time series is collected from multiple financial news archives and publicly accessible news repositories, including established sources such as the Reuters News Archive. In addition, we gather supplementary news articles from Google News through third-party aggregation tools, which aggregate and index articles from a wide range of media outlets.

All collected news articles are timestamped and systematically aligned with the corresponding time-series observations based on their publication time, ensuring temporal consistency between the textual and numerical modalities. This alignment enables the model to effectively leverage contemporaneous textual information for time-series forecasting.

\section{Detailed baseline configurations}
\label{sec:detailed-baseline-configurations}
To ensure fair comparison, we conduct systematic hyperparameter search on the validation set for all baseline methods. For architectural parameters, we maintain consistency across models where applicable: embedding dimension is uniformly set to 512, dropout rate to 0.1, and activation function primarily uses GELU. Most encoder-decoder architectures employ 2 encoder layers and 1 decoder layer to balance model capacity and avoid overfitting on smaller datasets.

Learning rate is the primary hyperparameter subject to grid search. For traditional time-series models, we search over three standard values: 0.0005, 0.0001, and 0.00005. Models exhibiting stable performance in preliminary experiments (e.g., TimesNet) directly adopt 0.0005 to reduce computational cost. For multimodal models with more complex training dynamics, the search range extends to [0.005, 0.001, 0.0001, 0.0005]. Additional architecture-specific searches include: decoder layers for FEDformer (1 or 2 layers), attention heads for PatchTST (ranging from 2 to 8), and factor parameters for Pyraformer (1 or 3). Dataset-specific adjustments are made when necessary.For instance, Pyraformer's window size is adjusted to [4, 4] for Bitcoin's short sequences, and TimeLLM's top\_k is reduced to 3 for smaller datasets like FNSPID.

\section{Prompts}
In this section, we list the prompt template we use throughout the paper. Other than ensuring that generations could be parsed properly, our prompt were not optimized towards any particular models.

\subsection{Template for Temporal Evolution Primitives Extraction}

\begin{promptbox}[Prompt Template]
\small
\begin{verbatim}
You are a professional {role from the dataset background}. 
Your task is to analyze the provided textual information and 
infer temporal evolution patterns that may impact future time series behavior.

Textual Input: {text_content}
Domain Context: {domain_context}

Instructions:
Based on the textual content provided below, analyze and classify 
the following four temporal evolution primitives:

1. Mean Shift - Infer the direction and magnitude of anticipated 
   level changes:
   <strong-rise | mild-rise | stable | mild-drop | strong-drop>

2. Volatility - Infer the anticipated changes in volatility regime:
   <surge | rise | stable | fall | calm>

3. Shape - Infer the dominant trend morphology pattern over the 
   forecast horizon:
   <ascend | descend | peak | trough | oscillate>

4. Lag and Decay - Infer the temporal localization and persistence 
   of the impact:
   <early-fade | early-persist | mid-fade | mid-persist | late | diffuse>

You MUST output in the following EXACT format with no extra text:

Mean Shift: <strong-rise | mild-rise | stable | mild-drop | strong-drop>
Volatility: <surge | rise | stable | fall | calm>
Shape: <ascend | descend | peak | trough | oscillate>
Lag: <early-fade | early-persist | mid-fade | mid-persist | late | diffuse>

Provide your analysis in the exact format specified above.
\end{verbatim}
\end{promptbox}

\end{document}